\documentclass[
  aps,
  prapplied,
  reprint,
  superscriptaddress,
  longbibliography,
  floatfix
]{revtex4-2}

\usepackage[T1]{fontenc}
\usepackage[utf8]{inputenc}
\usepackage{microtype}
\usepackage{amsmath,amssymb,amsthm,mathtools}
\usepackage{bm}
\usepackage{bbm}
\usepackage{mathrsfs}
\usepackage{graphicx}
\usepackage[colorlinks=true,citecolor=magenta,linkcolor=magenta,urlcolor=magenta]{hyperref}

\DeclareMathOperator{\Vol}{Vol}

\DeclareMathOperator{\Tr}{Tr}
\DeclareMathOperator{\rank}{rank}
\DeclareMathOperator{\range}{range}

\newcommand{\E}{\mathbb{E}}

\newcommand{\Cov}{\mathrm{Cov}}
\newcommand{\R}{\mathbb{R}}

\newcommand{\N}{\mathbb{N}}
\newcommand{\1}{\mathbbm{1}}

\allowdisplaybreaks
\setlength{\jot}{2pt}

\newtheorem{theorem}{Theorem}[section]
\newtheorem{lemma}[theorem]{Lemma}
\newtheorem{corollary}[theorem]{Corollary}
\newtheorem{proposition}[theorem]{Proposition}
\newtheorem{assumption}[theorem]{Assumption}

\newtheorem{definition}[theorem]{Definition}

\newcommand{\IPC}{C_{\mathrm{ip}}}
\newcommand{\IC}{C_{\mathrm{i}}}

\begin{document}

\title{Innovation Capacity of Dynamical Learning Systems}

\author{Anthony M.\ Polloreno}
\email{ampolloreno@gmail.com}

\date{\today}

\begin{abstract}
In noisy physical reservoirs, the classical information-processing capacity $C_{\mathrm{ip}}$ quantifies how well a linear readout can realize tasks measurable from the input history, yet $C_{\mathrm{ip}}$ can be far smaller than the observed rank of the readout covariance. We explain this ``missing capacity'' by introducing the innovation capacity $C_{\mathrm{i}}$, the total capacity allocated to readout components orthogonal to the input filtration (Doob innovations, including input-noise mixing). Using a basis-free Hilbert-space formulation of the predictable/innovation decomposition, we prove the conservation law $C_{\mathrm{ip}}+C_{\mathrm{i}}=\mathrm{rank}(\Sigma_{XX})\le d$, so predictable and innovation capacities exactly partition the rank of the observable readout dimension covariance $\Sigma_{XX}\in \mathbb{R}^{\rm d\times d}$. In linear-Gaussian Johnson-Nyquist regimes, $\Sigma_{XX}(T)=S+T N_0$, the split becomes a generalized-eigenvalue shrinkage rule and gives an explicit monotone tradeoff between temperature and predictable capacity. Geometrically, in whitened coordinates the predictable and innovation components correspond to complementary covariance ellipsoids, making $C_{\mathrm{i}}$ a trace-controlled innovation budget. A large $C_{\mathrm{i}}$ forces a high-dimensional innovation subspace with a variance floor and under mild mixing and anti-concentration assumptions this yields extensive innovation-block differential entropy and exponentially many distinguishable histories. Finally, we give an information-theoretic lower bound showing that learning the induced innovation-block law in total variation requires a number of samples that scales with the effective innovation dimension, supporting the generative utility of noisy physical reservoirs.
\end{abstract}

\maketitle

\section{Introduction}
\label{sec:intro}

Analog and stochastic computation increasingly appear as first-class components in modern machine learning. Diffusion models integrate reverse-time SDEs~\cite{sohl2015deep}.
Annealing, Langevin and Hamiltonian methods expose temperature and stochasticity as computational knobs. Specialized hardware, including optical interferometers~\cite{hamerly2019large, li2025photonics}, analog crossbars, coupled oscillators~\cite{todri2024computing}, annealers and coherent Ising machines~\cite{takesue2019large, ghimenti2025geometry, mcmahon2016fully}, implement linear maps and energy flows by exploiting native physical dynamics. In parallel, energy and precision costs per bit motivate architectures that use natural dynamics to perform computations and digitize only where exactness is essential.

A natural setting for studying such architectures is stochastic reservoir computing. A fixed dynamical system with a continuous and high-dimensional latent state is driven by an input and read out linearly. Recent work shows that the concept class accessible to a linear readout can be severely restricted under physical constraints~\cite{polloreno2025restrictions}, by demonstrating that the classical information-processing capacity ($\IPC$)~\cite{dambre2012information}, which quantifies how well a reservoir realizes a family of input-measurable tasks, can fall well below the observable rank of the readout covariance. In this work we explain that apparent ``missing'' capacity. We show that a noisy reservoir, in addition to computing on inputs, also transforms and propagates Doob innovations in the classical signal-processing sense~\cite{wold1938study,wiener1949extrapolation,kolmogorov1941interpolation,kalman1960,kailath1974innovations,bode2006simplified}. Formal definitions of the predictable/innovation decomposition appear in Sec.~\ref{sec:innovation}.

$\IPC$ only scores the input-measurable component and ignores computation devoted to random variables orthogonal to the input filtration. We therefore define an innovation capacity $\IC$ as the total capacity allocated to tasks in the orthogonal complement of the input-measurable subspace. Our main structural result is an exact conservation law:
\begin{equation}
\IPC+\IC=\rank(\Sigma_{XX})\le d,
\end{equation}
where $d$ is the number of readout coordinates (the dimension of $X$, with $\Sigma_{XX}$ the covariance), so whatever $\IPC$ ``goes missing'' in noisy settings reappears as $\IC$. Consequently, the exponential degradation of $\IPC$ in \cite{polloreno2025restrictions} implies an exponentially large lower bound on $\IC$ for physical, stochastic reservoir computers. In Sec.~\ref{sec:innovation} we define $\IC$ in a basis-free way using the $L^2$ Doob decomposition and show that $\IPC$ and $\IC$ are complementary traces on the readout subspace. In Sec.~\ref{subsec:RLC} we show that for linear-Gaussian reservoirs with Johnson-Nyquist noise scaling \cite{johnson1928thermal, nyquist1928thermal, callen1951irreversibility} we obtain a closed-form generalized-eigenvalue shrinkage formula and a monotone temperature tradeoff. In Sec.~\ref{sec:innovation-geometry}, we give an ellipsoid geometry for the predictable/innovation split, showing that a large innovation budget forces extensive block entropy along a trimmed innovation subspace under mild anti-concentration regularity and hence implies many distinguishable histories. Finally, we prove a distribution-free lower bound for learning the innovation-block law in total variation. We show a large innovation dimension implies hardness via an explicit total variation (TV) and Kullback-Leibler (KL) \cite{kullback1951information} packing and Fano's inequality \cite{fano1966transmission,yu1997assouad,tsybakov2009nonparametric,wainwright2019highdim}, which formally proves a certain kind of generative utility provided by physical, stochastic reservoir computers \cite{polloreno2025restrictions}. 

\section{A simple motivating example}
\label{sec:simple}

Following Shannon's classical observation that physically realizable circuits access an exponentially small fraction of Boolean functions~\cite{Shannon1949}, we highlight an analogous phenomenon for noisy circuits and show that under natural physical constraints, noisy dynamics can generate an exponentially large typical set of output histories.

Consider a depth-$L$ layered directed acyclic graph (modeling a circuit) in which each edge is a noisy channel with total-variation (Dobrushin) contraction coefficient $\theta\in[0,1)$~\cite{dobrushin1956central}. Assume bounded fan-in $\le B$ and polynomial (in the input size $n$) width/size (the standard ``physical'' regime \cite{poulin2011quantum}).
Let $\delta_L$ denote the worst-case total-variation sensitivity of the output $X$ to the input $U$:
\begin{equation}
\label{eq:deltaL}
\delta_L \;:=\; \sup_{u,u'}\ \big\|P(X\mid U{=}u)-P(X\mid U{=}u')\big\|_{\mathrm{TV}}.
\end{equation}
A union bound over all input-to-output paths, together with per-edge contraction, yields
\begin{equation}
\label{eq:deltaL-simple}
\delta_L \;\le\; N_L\,\theta^{L}\ \le\ \mathrm{poly}(n)\,(B\,\theta)^{L},
\end{equation}
where $N_L$ is the number of directed input-to-output paths of length $L$ and the final inequality uses $N_L\le \mathrm{poly}(n)\,B^L$.

In the subcritical regime $B\theta<1$, $\delta_L$ decays exponentially in $L$~\cite{kesten1966limit, EvansSchulman1999}, so the output distribution approaches a channel-dependent attractor $\pi$ (e.g.\ uniform in symmetric/no-bias cases). Applying a continuity bound (Fannes-Audenaert) to $H(X\mid U{=}u)$ uniformly in $u$ yields an entropy floor
\begin{equation}
\label{eq:entropy-floor-tight}
H(X\mid U)\ \ge\ H(\pi)\;-\;f_k\!\big(\min\{1,\delta_L\}\big),
\end{equation}
where $f_k(\delta)=\delta\log_2(k{-}1)+h_2(\delta)$ for $k$ output symbols~\cite{audenaert2005continuity} and
\begin{equation}
h_2(\delta)\ :=\ -\delta\log_2\delta\ -\ (1-\delta)\log_2(1-\delta)
\end{equation}
is the binary entropy function.

This closeness lifts to blocks.
If one run is within TV $\delta_L$ of $\pi$, then for $M$ independent runs,
\begin{equation}
\big\|P(X_{1:M})-\pi^{\otimes M}\big\|_{\mathrm{TV}}
\ \le\ \min\{1,\;M\,\delta_L\}.
\end{equation}
If $\pi$ is non-degenerate ($H(\pi)>0$), the typical set of $X_{1:M}$ has cardinality $\approx 2^{M H(\pi)}$.
Thus, even though the space of physically accessible functions is exponentially degraded \cite{Shannon1949}, the explored history set is exponentially large in block length.
The rest of the paper shows that, for noisy reservoirs, this ``large typical set'' phenomenon is general and controlled by a conserved budget that splits $\rank(\Sigma_{XX})$ into Doob-predictable ($\IPC$) and innovation ($\IC$) capacity.

\section{Computing Capacity over Extended Bases}
\label{sec:numerical}

This section describes the finite-sample capacity estimator used in our simulations and illustrates the predictable/innovation budget on a linear RLC circuit and a nonlinear Duffing oscillator with I/Q readout at finite temperature.
Section~\ref{sec:innovation} then gives the basis-free definitions and exact identities.

\subsection{Capacity estimator and sectoral split}
\label{subsec:estimator}

Let $\mathbf{X}\in\R^{n\times d}$ be a matrix of $d$ readout features over $n$ time indices; i.e., row $t$ is $X_t^\top\in\R^d$.
Let $\mathbf{z}\in\R^n$ be a centered (zero-mean) scalar task time series.
In practice we also center each column of $\mathbf{X}$ (or equivalently include an intercept); the formulas below assume centering.

The empirical per-task capacity for $\mathbf{z}$ is given as
\begin{equation}
\label{eq:cap-task}
\begin{aligned}
C_{\mathrm{ip}}\!\left(\mathbf{X},\mathbf{z}\right)
&=
1-\frac{\min_{w\in\R^d}\sum_{t=1}^n\!\bigl(z_t-w^\top X_t\bigr)^2}{\sum_{t=1}^n z_t^2}
\\
&=
\frac{\mathbf{z}^\top \mathbf{X}\,(\mathbf{X}^\top \mathbf{X})^{+}\mathbf{X}^\top \mathbf{z}}{\mathbf{z}^\top \mathbf{z}},
\end{aligned}
\end{equation}
where $(\cdot)^+$ is the Moore-Penrose inverse (useful when $\mathbf{X}^\top\mathbf{X}$ is ill-conditioned or rank-deficient \cite{polloreno2023note, hu2023tackling}).

To estimate innovation-related capacities in noisy systems, we use the Doob decomposition with respect to the input history, which yields the innovation residual associated with the reservoir noise history \cite{doob1953}. Operationally (in simulation, or in experiments with repeated trials), we fix an input realization and average the readout across independent noise realizations:
\begin{equation}
\label{eq:trial-avg}
\langle X_t\rangle \ \approx\ \E[X_t\mid \mathscr F^{\rm in}_t].
\end{equation}
In hardware, this corresponds to repeating the same injected input waveform across trials under stable operating conditions and averaging the resulting readouts.
Because the reservoir is causal and the exogenous fluctuations are treated as independent of the input, holding the entire input waveform fixed across trials estimates the same conditional mean as conditioning on the input history up to time $t$.

We then define the (Doob) innovation residual
\begin{equation}
\label{eq:DeltaX-def}
\Delta X_t \;:=\; X_t-\langle X_t\rangle.
\end{equation}
This $\Delta X_t$ is, by construction, orthogonal in $L^2$ to input-measurable tasks at time $t$.
In additive linear reservoirs $\Delta X_t$ is ``noise-only,'' while in nonlinear/multiplicative reservoirs it also contains input$\times$noise mixing (while remaining orthogonal to the input $\sigma$-algebra at time $t$).

To approximate the basis-free split in Sec.~\ref{sec:innovation} with finite task sets, we construct orthonormal task blocks and sum their empirical capacities over an input-measurable task family (the predictable sector), built from delayed input polynomials as in~\cite{dambre2012information,kubota2021}, an innovation family built from $\Delta X$ (innovation sector) and mixed tasks built from products of input tasks and $\Delta X$ tasks.
Each task block is centered, projected onto the current orthogonal complement and whitened so that summed capacities are stable and sector-wise additive up to sampling error. In direct analog to the information processing capacity $C_{\rm ip}$ in Eq.~\eqref{eq:cap-task}, in the following examples we name the sum of the innovation and mixed tasks the innovation capacity and denote it $C_i$.

\subsection{Linear (RLC) reservoir}
\label{subsec:RLC}

To start, we consider a stable linear state-space model driven by input $u$ and internal noise $\eta$,
\begin{equation}
\label{eq:rlc-ct}
\dot{x}(t)=A x(t)+B_s u(t)+B_n \eta(t),\qquad  X(t)=C x(t)\in\R^{d}.
\end{equation}
In steady state, the readout covariance decomposes additively as $\Sigma_{XX}(T)=S+N(T)$,
\begin{equation}
\label{eq:SplusN}
S:=C P_s C^\top,\quad N(T):=C P_n(T) C^\top,
\end{equation}
where $P_s$ and $P_n(T)$ solve continuous-time Lyapunov equations \cite{parks1992lyapunov}.
Under Johnson-Nyquist scaling, the innovation covariance inflates linearly with temperature, $N(T)=T\,N_0$ for some $N_0\succeq 0$. In this additive setting the predictable/innovation split has a closed form:
\begin{equation}
\begin{aligned}
\label{eq:rlc-traces}
\IPC(T)&=\Tr\!\big(S\,\Sigma_{XX}(T)^{+}\big),\\
\IC(T)&=\Tr\!\big(N(T)\,\Sigma_{XX}(T)^{+}\big),
\end{aligned}
\end{equation}
and $\IPC(T)+\IC(T)=\rank\Sigma_{XX}(T)$.
\begin{proposition}[Johnson-Nyquist temperature tradeoff]
\label{prop:temperature-clean}
Assume $\Sigma_{XX}(T)=S+T\,N_0$ with $S,N_0\succeq 0$ and $T\ge 0$.
Let $r:=\rank\Sigma_{XX}(T)$, assumed constant on an interval $T\in[T_1,T_2]$, and let $r_S:=\rank S$.
Then there exist nonnegative finite generalized eigenvalues $\lambda_1,\dots,\lambda_{r_S}\in[0,\infty)$ of the \emph{symmetric pencil} $(N_0,S)$ (i.e.\ scalars $\lambda$ for which $N_0v=\lambda Sv$ has a nonzero solution $v$ with $Sv\neq 0$) such that for all $T\in[T_1,T_2]$,
\begin{equation}
\label{eq:temp-tradeoff}
\IPC(T)=\sum_{k=1}^{r_S}\frac{1}{1+T\lambda_k},\qquad
\IC(T)=(r-r_S)\;+\;\sum_{k=1}^{r_S}\frac{T\lambda_k}{1+T\lambda_k}.
\end{equation}
In particular, $\IPC(T)$ is nonincreasing and $\IC(T)$ is nondecreasing in $T$, and $\IPC(T)+\IC(T)=r$ on $[T_1,T_2]$.
\end{proposition}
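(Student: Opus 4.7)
The plan is to jointly diagonalize the PSD pair $(S,N_0)$ on their common range and then read \eqref{eq:temp-tradeoff} off the trace identities \eqref{eq:rlc-traces}. Set $V:=\range\Sigma_{XX}(T)$. Because for any $v\in\ker\Sigma_{XX}(T)$ with $T>0$ one has $v^\top Sv+Tv^\top N_0v=0$ with both summands nonnegative, it follows that $v\in\ker S\cap\ker N_0$, so $V=\range S+\range N_0$ is independent of $T\in[T_1,T_2]$. On $V$ the operator $\Sigma_{XX}(T)$ is invertible, so $\Sigma_{XX}(T)^{+}$ coincides with the ordinary inverse there and both traces in \eqref{eq:rlc-traces} may be computed as ordinary traces on $V$.

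Fix any $T_0\in[T_1,T_2]$ with $T_0>0$, let $M:=\Sigma_{XX}(T_0)^{1/2}|_V$, and define the whitened operators $S':=M^{-1}SM^{-1}$ and $N_0':=M^{-1}N_0M^{-1}$ on $V$. By construction $S'+T_0N_0'=I_V$, so $S'$ and $N_0'$ commute and share an orthonormal eigenbasis of $V$; let $\alpha_k\in[0,1]$ denote the eigenvalues of $S'$ (so those of $N_0'$ are $(1-\alpha_k)/T_0$). By congruence $\rank S'=\rank S=r_S$, so exactly $r_S$ of the $\alpha_k$ are positive, and for each such index the ratio $\lambda_k:=(1-\alpha_k)/(T_0\alpha_k)\in[0,\infty)$ is realized as a generalized eigenvalue of the pencil $(N_0,S)$: pulling back by $M^{-1}$ gives $v_k:=M^{-1}v_k'$ with $N_0v_k=\lambda_k Sv_k$ and $Sv_k\neq 0$. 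One checks that $\lambda_k$ is intrinsic to the pencil (independent of the auxiliary $T_0$).

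Using $(S'+TN_0')^{-1}=M\Sigma_{XX}(T)^{-1}M$ on $V$ and the cyclic property of trace, the traces in \eqref{eq:rlc-traces} become $\Tr\bigl(S'(S'+TN_0')^{-1}\bigr)$ and $T\Tr\bigl(N_0'(S'+TN_0')^{-1}\bigr)$ respectively. Evaluating in the joint eigenbasis gives, for each positive $\alpha_k$, the contributions $1/(1+T\lambda_k)$ to $\IPC(T)$ and $T\lambda_k/(1+T\lambda_k)$ to $\IC(T)$; the $r-r_S$ directions with $\alpha_k=0$ contribute $0$ and $1$ respectively, yielding \eqref{eq:temp-tradeoff}. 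Monotonicity in $T$ is immediate term-by-term because $\lambda_k\ge 0$, and the conservation $\IPC(T)+\IC(T)=r$ follows by summing the algebraic identity $(1+T\lambda_k)^{-1}+T\lambda_k(1+T\lambda_k)^{-1}=1$ over the $r_S$ positive indices and adding the $r-r_S$ pure-innovation directions. The main point requiring care is the rank-deficient case: one must verify that the null directions of $S$ inside $V$ are accounted for by the additive offset $r-r_S$ in $\IC$ rather than being assigned an ``infinite'' generalized eigenvalue, which is precisely why the sum in \eqref{eq:temp-tradeoff} runs only to $r_S$.
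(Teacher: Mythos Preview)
Your proof is correct and follows essentially the same approach as the paper: restrict to the active subspace $\range\Sigma_{XX}(T)$, show that the traces in \eqref{eq:rlc-traces} are governed by the finite generalized eigenvalues $\lambda_k$ of the pencil $(N_0,S)$, and read off the shrinkage form $\gamma_k=1/(1+T\lambda_k)$. The only tactical difference is that you whiten once at a fixed $T_0$ so that $S'$ and $N_0'$ commute and can be simultaneously diagonalized, whereas the paper works directly with the $T$-dependent pencil $Sv=\gamma\,\Sigma_{XX}(T)v$ and algebraically converts each positive $\gamma$ to a $\lambda$; both routes yield the same $\{\lambda_k\}$ and the same formulas.
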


\begin{proof}
Work on the active subspace $\mathcal R_X=\range(\Sigma_{XX}(T))$ (dimension $r$), on which $\Sigma_{XX}(T)$ is invertible and $\Sigma_{XX}(T)^+$ agrees with its inverse.
Consider the generalized eigenproblem for the symmetric pencil $(S,\Sigma_{XX}(T))$ on $\mathcal R_X$:
\begin{equation}
S v=\gamma\,\Sigma_{XX}(T)\,v,\qquad v\in\mathcal R_X.
\end{equation}
The nonzero eigenvalues $\gamma_1,\dots,\gamma_{r_S}$ are the positive eigenvalues of $\Sigma_{XX}(T)^{+}S$ and satisfy $\gamma_k\in(0,1]$.
Taking traces gives
\begin{equation}
\IPC(T)=\Tr\!\big(S\,\Sigma_{XX}(T)^{+}\big)=\Tr\!\big(\Sigma_{XX}(T)^{+}S\big)=\sum_{k=1}^{r_S}\gamma_k.
\end{equation}
For any eigenpair $(\gamma,v)$ with $\gamma>0$, rearranging
\begin{equation}
S v=\gamma(S+TN_0)v
\end{equation}
yields
\begin{equation}
(1-\gamma)\,S v=\gamma T\,N_0 v,
\end{equation}
so $v$ also satisfies $N_0 v=\lambda S v$ with $\lambda=\frac{1-\gamma}{\gamma T}\ge 0$.
Conversely, if $N_0 v=\lambda S v$ with $Sv\neq 0$, then
\begin{equation}
S v = \frac{1}{1+T\lambda}\,(S+TN_0)v=\frac{1}{1+T\lambda}\,\Sigma_{XX}(T)v,
\end{equation}
so $\gamma=\frac{1}{1+T\lambda}$.
Thus the $r_S$ positive eigenvalues of $\Sigma_{XX}(T)^{+}S$ take the shrinkage form $\gamma_k=(1+T\lambda_k)^{-1}$ for the $r_S$ finite generalized eigenvalues $\{\lambda_k\}$ of $(N_0,S)$, and the claimed formula for $\IPC(T)$ follows.

Finally, on $\mathcal R_X$ we have $\Tr(\Sigma_{XX}(T)\Sigma_{XX}(T)^+)=r$, so
\begin{equation}
\begin{aligned}
\IC(T)&=\Tr\!\big(TN_0\,\Sigma_{XX}(T)^{+}\big)
\\&=\Tr\!\big((\Sigma_{XX}(T)-S)\Sigma_{XX}(T)^{+}\big)
\\&=r-\IPC(T),
\end{aligned}
\end{equation}
and rewriting $r-\IPC(T)$ gives \eqref{eq:temp-tradeoff}.
Monotonicity follows by differentiating the scalar shrinkage factors.
\end{proof}

The RLC circuit shown in Fig.~\ref{fig:budget} is a series RLC oscillator
\begin{equation}
\dot q = i,\qquad
L\dot i = -Ri - \frac{1}{C_{\rm cap}}q + \alpha_s u + \alpha_n \eta,
\end{equation}
for current $i$, capacitance $C_{\rm cap}$, charge $q$, noise strength $\alpha_n$, drive strength $\alpha_s$ and with drive $u$ and thermal source $\eta$ that enter additively through the inductor/current equation (i.e.\ a voltage-drive channel in the standard circuit interpretation).
This is a state space model with state variable $(q,i)$, but we read out the measured voltage across the capacitor.
When sweeping temperature $T$ while keeping the input statistics fixed, the signal covariance is independent of $T$, while the innovation covariance scales linearly because $\eta\sim\mathcal N(0,\gamma T)$ for a scalar $\gamma$, giving $N(T)=T\,N_0$.
Therefore the RLC experiment satisfies the assumption $\Sigma_{XX}(T)=S+T\,N_0$ of Proposition~\ref{prop:temperature-clean} and we see excellent agreement between the theory (solid lines) and simulation (points).

\begin{figure}[t]
  \centering
  \includegraphics[width=.85\linewidth]{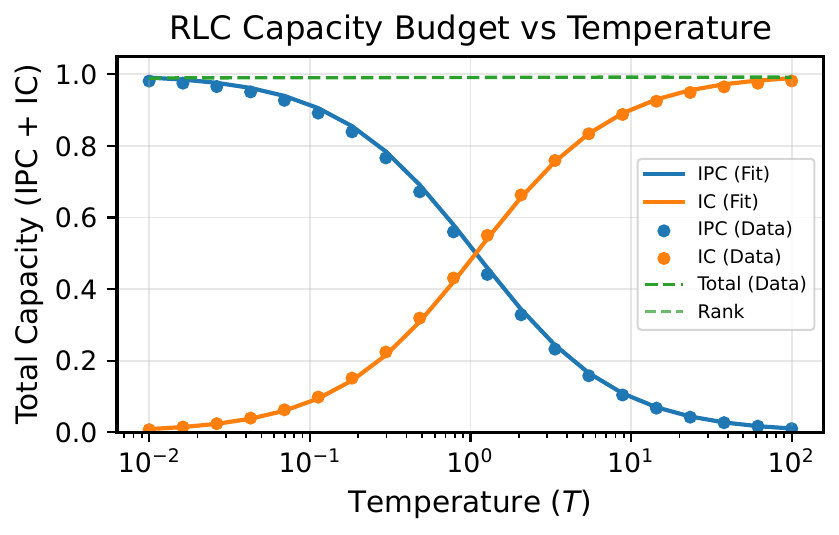}
  \caption{RLC temperature sweep: data-driven $\IPC$ (blue $\cdot$) and $\IC$ (orange $\cdot$) versus analytic predictions (solid) from Proposition~\ref{prop:temperature-clean}. The sum tracks $\rank \Sigma_{XX}(T)$.}
  \label{fig:budget}
\end{figure}

\subsection{Duffing reservoir}
\label{subsec:duffing-theory}

\begin{figure*}[t]
  \centering
  \begin{minipage}[t]{0.49\textwidth}
    \centering
    \includegraphics[width=\linewidth]{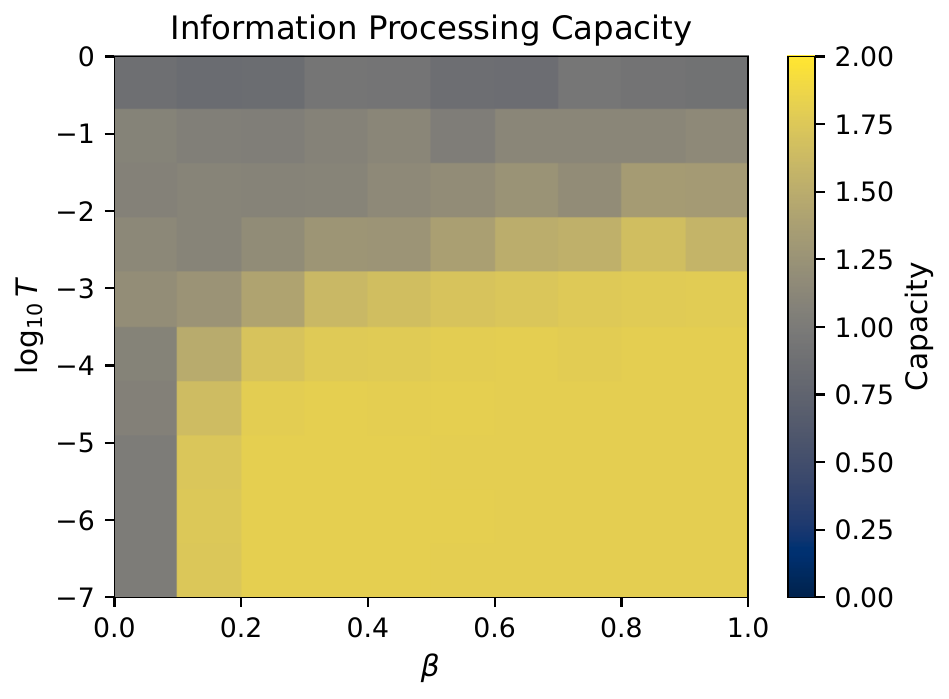}
    \vspace{0.6ex}
    \begin{minipage}[t]{0.49\linewidth}
      \centering
      \includegraphics[width=\linewidth]{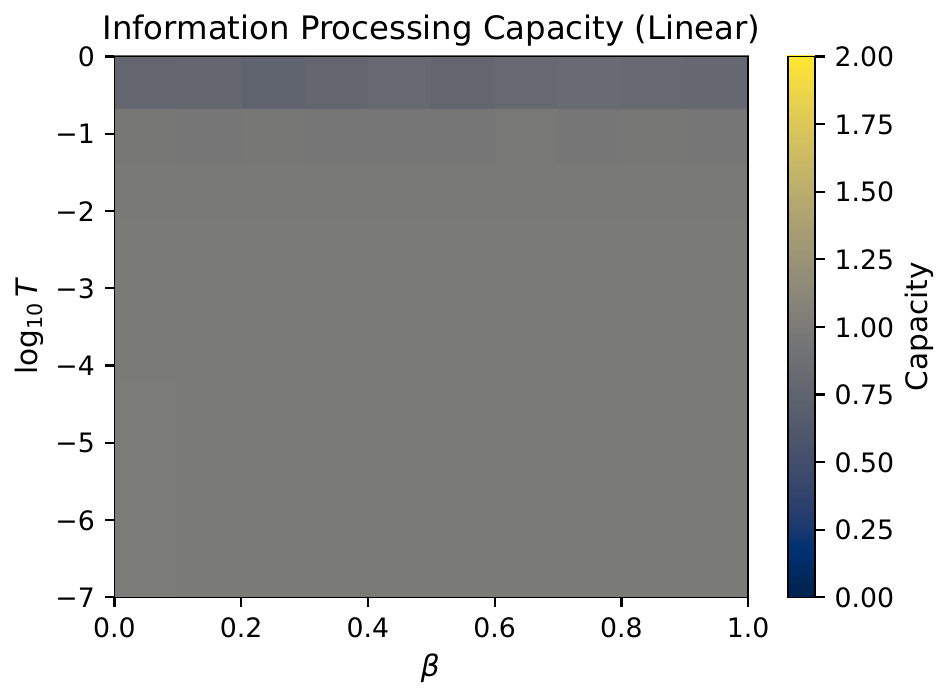}
    \end{minipage}\hfill
    \begin{minipage}[t]{0.49\linewidth}
      \centering
      \includegraphics[width=\linewidth]{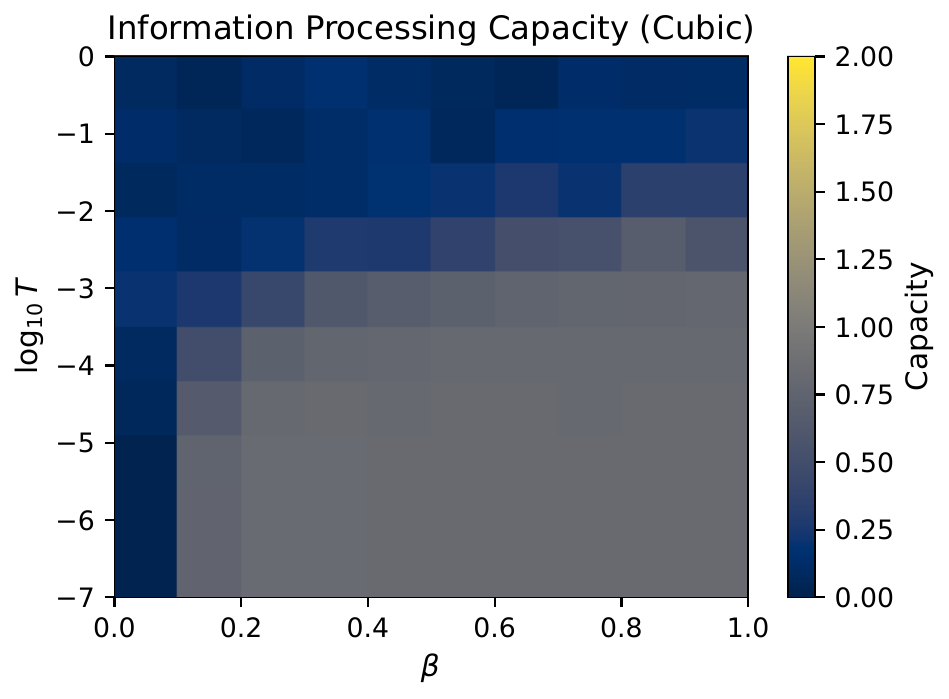}
    \end{minipage}
  \end{minipage}\hfill
  \begin{minipage}[t]{0.49\textwidth}
    \centering
    \includegraphics[width=\linewidth]{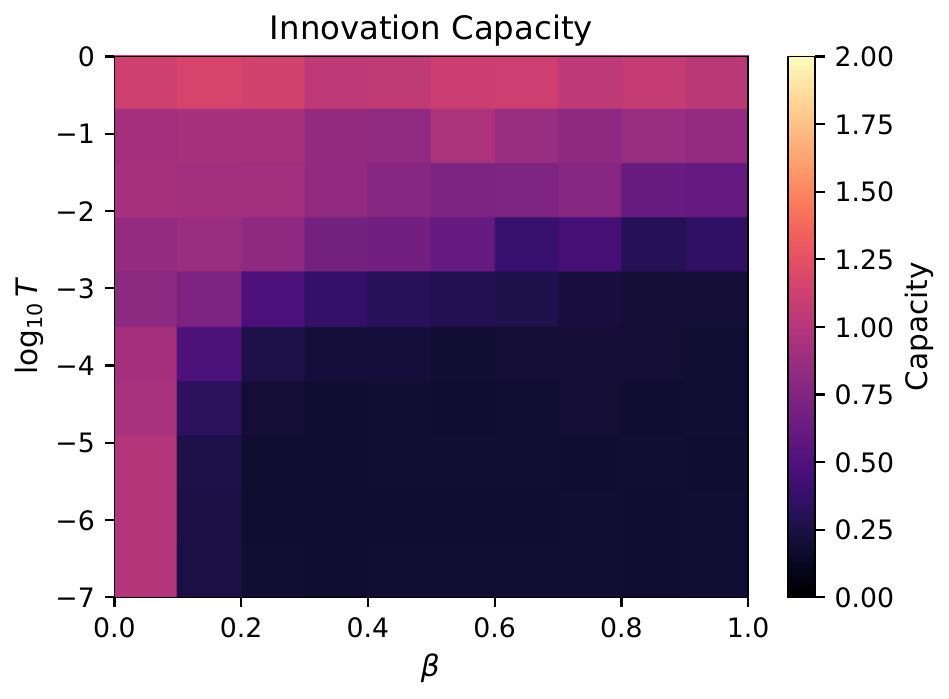}
    \vspace{0.6ex}
    \begin{minipage}[t]{0.49\linewidth}
      \centering
      \includegraphics[width=\linewidth]{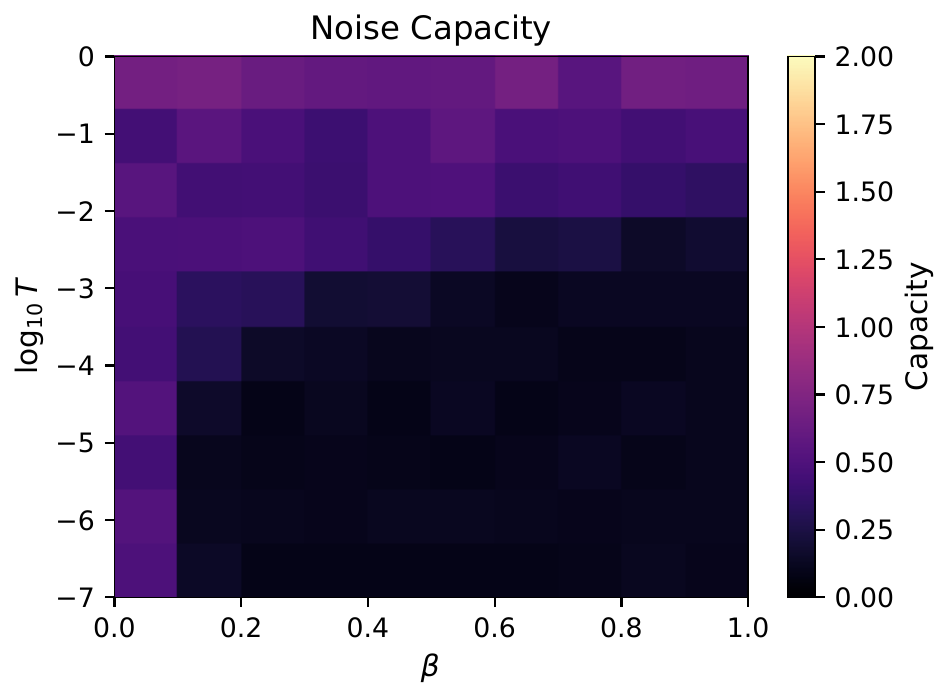}
    \end{minipage}\hfill
    \begin{minipage}[t]{0.49\linewidth}
      \centering
      \includegraphics[width=\linewidth]{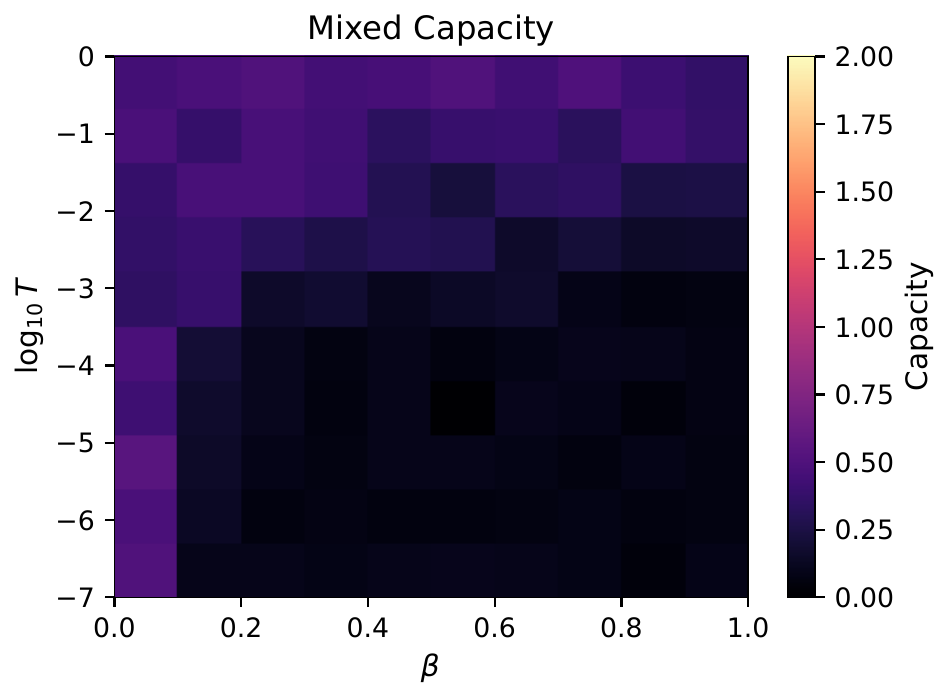}
    \end{minipage}
  \end{minipage}
  \caption{Simulated capacities over $(\beta,T)$ using the demodulate-LPF protocol \eqref{eq:duff-demod}-\eqref{eq:duff-readout} for a Duffing oscillator.
  Top: total IPC and total innovation.
  Bottom: IPC constituents (linear/cubic) and innovation constituents (noise/mixed).}
  \label{fig:duffing-heatmaps}
\end{figure*}

We now illustrate the same budget picture in the nonlinear setting of a damped Duffing oscillator \cite{duffing1918} driven near a carrier $\omega$ and read out in baseband I/Q.
The organizing idea is again to split the readout covariance into a Doob-predictable portion driven by the input history and an innovation portion driven by intrinsic fluctuations.
For the Duffing oscillator, this split is convenient after two standard signal-processing steps: demodulation to baseband and an adiabatic (slow-envelope) approximation.
In that regime the Duffing nonlinearity generates a dominant cubic correction, so the predictable response is well captured by linear and cubic deterministic sectors.

Specifically,
\begin{equation}
\label{eq:duff-ode}
\ddot x + \delta\,\dot x + \alpha\,x + \beta\,x^3
= \alpha_s\, u(t)\cos(\omega t)\;+\;\alpha_n\,\sqrt{T}\,\eta(t),
\end{equation}
where $\alpha_s$ and $\alpha_n$ are scalar signal and noise coefficients, $\eta(t)$ is a zero-mean, unit-intensity stationary noise (idealized as Gaussian white noise) and the explicit $\sqrt{T}$ makes Johnson-Nyquist scaling transparent at baseband.
We assume a single-well regime, weak nonlinearity ($|\beta|$ small) and small detuning from $\omega$.

Define the complex baseband demodulation operator
\begin{equation}
\label{eq:duff-demod}
(\mathcal D_\omega y)(t)
\;:=\;
\mathrm{LPF}_\Omega\!\left( y(t)e^{-i\omega t}\right),
\end{equation}
with a low-pass filter (LPF) cutoff $\Omega\ll\omega$.
We use the baseband envelope $A(t)$ and I/Q readout $X(t)$:
\begin{equation}
\label{eq:duff-readout}
A(t):=2\,(\mathcal D_\omega x)(t),\qquad
X(t):=\begin{bmatrix}\Re A(t)\\ \Im A(t)\end{bmatrix}\in\R^{2}.
\end{equation}

A standard averaging/multiple-scales \cite{nayfeh1979nonlinear,kevorkian1996multiple} argument yields the adiabatic/slow-envelope equation
\begin{equation}
\label{eq:duff-env}
\dot A
=
\Big(-\tfrac{\delta}{2}+i\Delta\Big)A
\;+\;
i\mu |A|^2A
\;+\;
\kappa\, u(t)
\;+\;
\zeta_T(t),
\end{equation}
where $\Delta$ is the detuning, $\mu=\tfrac{3\beta}{8\omega}$, $\kappa=\tfrac{\alpha_s}{2\omega}$ (up to a phase convention) and $\zeta_T$ is the demodulated innovation obtained by applying $\mathcal D_\omega$ to $\alpha_n\sqrt{T}\,\eta$. Under the effective Johnson-Nyquist assumption, $\zeta_T(t)=\sqrt{T}\,\zeta_1(t)$ with $\zeta_1$ a unit-temperature innovation.

Let $\Sigma_{XX}(T,\beta):=\Cov(X(t))$ denote the baseband I/Q covariance (in stationarity, or over a long measurement window). Relative to the (continuous-time) input filtration $\mathscr F^{\rm in}_t:=\sigma(u(s):s\le t)$, the law of total covariance gives $\Sigma_{XX}=S+N$ with
\begin{equation}
\begin{aligned}
S(\beta)&:=\Cov\!\Big(\E[X(t)\mid \mathscr F^{\rm in}_t]\Big),\\
N(T,\beta)&:=\E\!\Big[\Cov(X(t)\mid \mathscr F^{\rm in}_t)\Big].
\end{aligned}
\end{equation}
In the demodulated adiabatic regime, the leading temperature dependence enters through innovation power, suggesting the baseline
\begin{equation}
\label{eq:duff-add}
\Sigma_{XX}(T,\beta)\approx S(\beta)+T\,N_1(\beta),
\end{equation}
where $N_1(\beta)$ is the unit-temperature innovation covariance.

At larger $(\beta,T)$, nonlinear signal-noise mixing generates additional covariance beyond $T\,N_1$.
To capture the leading next-order effect while keeping the model PSD and low-dimensional, we use an isotropic inflation term, with $\bar N(\beta):=\tfrac{1}{2}\Tr\,N_1(\beta)$:
\begin{equation}
\label{eq:duff-add-fit}
\Sigma_{XX}(T,\beta)\;\approx\;
S(\beta)+T\,N_1(\beta)
\;+\;
|\beta|^3\; g(T;\beta)\;\bar N(\beta)\,I_2,
\end{equation}
with a nonnegative polynomial parameterization
\begin{equation}
\label{eq:duff-gT}
g(T;\beta)\;=\;\sum_{k\ge 1} a_k(\beta)\,T^k,\qquad a_k(\beta)\ge 0.
\end{equation}
This is a controlled proxy that captures average in-band covariance inflation from cubic mixing without claiming to resolve anisotropy or detailed fluctuation-dissipation structure outside the adiabatic regime. Figure~\ref{fig:duffing-heatmaps} shows simulated linear, cubic and innovation capacities over $(\beta,T)$ and Fig.~\ref{fig:duff-covfit} validates the covariance-fit model \eqref{eq:duff-add-fit}-\eqref{eq:duff-gT} against simulation.

\begin{figure*}[t]
  \centering
  \begin{minipage}[t]{0.48\textwidth}
    \centering
    \includegraphics[width=\linewidth]{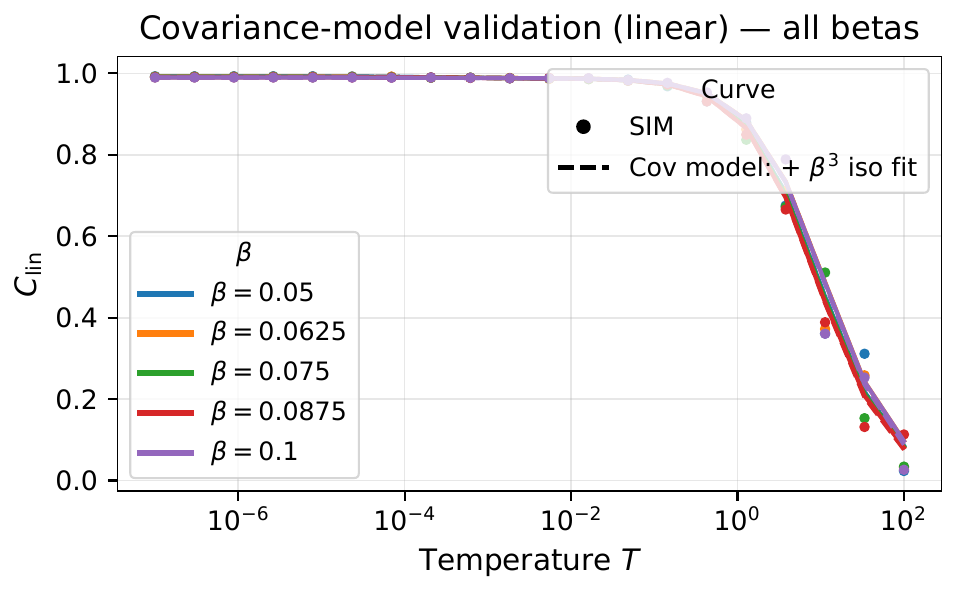}
  \end{minipage}\hfill
  \begin{minipage}[t]{0.48\textwidth}
    \centering
    \includegraphics[width=\linewidth]{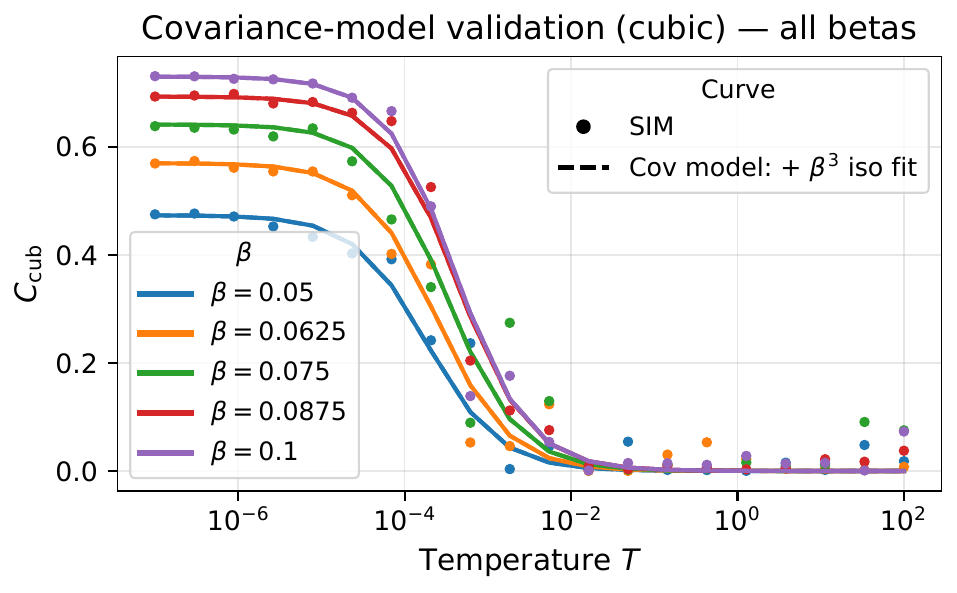}
  \end{minipage}
  \caption{Covariance-fit validation for the Duffing oscillator.
  Deterministic linear and cubic capacities versus $T$ for representative $\beta$ values.
  Markers denote direct simulation estimates of the deterministic sector; curves are covariance-model predictions from Eq.~\eqref{eq:duff-gT} with fitted nonnegative $\{a_k(\beta)\}$ in the isotropic correction of Eq.~\eqref{eq:duff-add-fit}.}
  \label{fig:duff-covfit}
\end{figure*}

\section{Innovation Capacity}
\label{sec:innovation}

Section~\ref{sec:numerical} computed capacities using a finite, sectorized task basis built from input histories and the residual $\Delta X$. We now give a basis-free definition of innovation capacity and prove exact identities. Throughout, expectations and covariances are with respect to a stationary distribution when it exists and all random variables are assumed square-integrable.

We consider a (possibly stochastic) driven dynamical system observed through a linear readout \cite{ehlers2025stochastic}.
At discrete times $t\in\mathbb{Z}^+$ the internal state $s_t$ evolves as
\begin{equation}
\label{eq:reservoir}
s_t \;=\; F(s_{t-1},\,u_t,\,\eta_t),\qquad
X_t \;=\; H(s_t)\in\mathbb{R}^{d},
\end{equation}
where $u_t$ is the external input, $\eta_t$ denotes exogenous fluctuations, $F$ is the dynamical update map, $H$ is the (linear) readout map and $X_t$ collects the $d$ readout coordinates.

Define the full filtration and its input subfiltration by
\begin{equation}
\begin{aligned}
\label{eq:filtrations}
\mathscr F_t \;&:=\; \sigma\!\big(s_0,\ (u_k,\eta_k):\ k\le t\big),
\\
\mathscr F^{\rm in}_t \;&:=\; \sigma(u_k:\ k\le t)\ \subseteq\ \mathscr F_t,
\end{aligned}
\end{equation}
with $\sigma(\cdot)$ denoting the generated sigma-algebra.
(For fading-memory reservoirs one can equivalently replace $\mathscr F^{\rm in}_t$ by a fixed finite window $\sigma(u_{t-h+1:t})$; the basis-free identities below are unchanged \cite{jaeger2001echo, maass2002real, lukosevicius2009reservoir}.)

\begin{equation}
\label{eq:doob-split}
\langle X_t\rangle \;:=\; \E[X_t\mid \mathscr F^{\rm in}_t],\qquad
\Delta X_t \;:=\; X_t-\langle X_t\rangle.
\end{equation}
By construction $\Delta X_t\perp L^2(\mathscr F^{\rm in}_t)$ in $L^2$.

All random variables live in the real Hilbert space
\begin{equation}
\label{eq:L2zero}
L^2_0(\Omega)\ :=\ \Big\{Z\in L^2(\Omega):\ \E[Z]=0\Big\},
\end{equation}
with inner product $\langle A,B\rangle:=\E[AB]$, since capacities are invariant to adding constants and we work with centered tasks/features.

Define the Doob-predictable task subspace and its orthogonal complement:
\begin{equation}
\label{eq:S-and-N}
\mathcal S\ :=\ L^2_0(\mathscr F^{\rm in}_t),\qquad
\mathcal N\ :=\ \mathcal S^\perp.
\end{equation}
Thus $\mathcal S$ consists of centered input-measurable tasks and $\mathcal N$ is the innovation task space (noise-only plus mixed tasks). Let $\mathcal H_X:=\mathrm{span}\{X_{t,1},\dots,X_{t,d}\}\subset L^2_0(\Omega)$ be the readout subspace and let $\Pi_X$ be the $L^2$-orthogonal projector onto $\mathcal H_X$.

\begin{definition}[Projection capacity and sector capacities]
\label{def:sectors}
For $Z\in L^2_0(\Omega)$ define the per-task capacity (with respect to the readout features $X_t$) by
\begin{equation}
\label{eq:per-task-cap}
C_X[Z]\;:=\;\|\Pi_X Z\|_{L^2}^2\;=\;\E\!\big[(\Pi_X Z)^2\big].
\end{equation}
If $Z$ has unit variance, $C_X[Z]$ equals the population $R^2$ of the best linear predictor of $Z$ from $X_t$.

Let $\{T_\ell\}$ be any complete orthonormal basis (ONB) of $\mathcal S$ and $\{U_m\}$ any complete ONB of $\mathcal N$, with all elements centered and unit variance.
Define the predictable and innovation capacities by
\begin{equation}
\label{eq:IPC-IC-def}
\IPC\ :=\ \sum_{\ell} C_X[T_\ell],\qquad
\IC\ :=\ \sum_{m} C_X[U_m].
\end{equation}
Since $\Pi_X$ has rank at most $d$, $\Pi_X\Pi_{\mathcal S}$ and $\Pi_X\Pi_{\mathcal N}$ are trace-class and the sums in \eqref{eq:IPC-IC-def} converge absolutely; Lemma~\ref{lem:trace-capacity} implies they are basis-independent and equal Hilbert-Schmidt traces.
\end{definition}

\begin{lemma}[Trace representation of summed capacities]
\label{lem:trace-capacity}
Let $\{T_\ell\}$ be an ONB for a closed subspace $\mathcal U\subset L^2_0(\Omega)$ with $\E[T_\ell^2]=1$.
Then
\begin{equation}
\sum_{\ell}C_X[T_\ell]=\Tr(\Pi_X\Pi_{\mathcal U}),
\qquad
0\le \Tr(\Pi_X\Pi_{\mathcal U})\le \Tr(\Pi_X),
\end{equation}
where $\Pi_{\mathcal U}$ is the orthogonal projector onto $\mathcal U$.
\end{lemma}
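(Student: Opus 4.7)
The plan is to unpack the Hilbert-space definition of $C_X$, recognize the summand as a diagonal matrix element of $\Pi_X\Pi_{\mathcal U}$ in the ONB $\{T_\ell\}$, and then extend to an ONB of $L^2_0(\Omega)$ to get a basis-free trace. The non-negativity and upper bound will then follow from standard operator-monotonicity arguments using the cyclic property of the trace.

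Concretely, I would first rewrite the per-task capacity as an inner product. Since $\Pi_X$ is an orthogonal projector (self-adjoint and idempotent),
\begin{equation}
C_X[T_\ell]\;=\;\|\Pi_X T_\ell\|_{L^2}^2\;=\;\langle T_\ell,\Pi_X T_\ell\rangle.
\end{equation}
Because $T_\ell\in\mathcal U$ we have $\Pi_{\mathcal U}T_\ell=T_\ell$, so equivalently $C_X[T_\ell]=\langle T_\ell,\Pi_X\Pi_{\mathcal U}T_\ell\rangle$.

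Next I would extend $\{T_\ell\}$ to an ONB of $L^2_0(\Omega)$ by adjoining any ONB $\{V_k\}$ of $\mathcal U^\perp$. For each $V_k$, $\Pi_{\mathcal U}V_k=0$, hence $\langle V_k,\Pi_X\Pi_{\mathcal U}V_k\rangle=0$. Summing diagonal matrix elements in this ONB gives
\begin{equation}
\sum_\ell C_X[T_\ell]\;=\;\sum_\ell\langle T_\ell,\Pi_X\Pi_{\mathcal U}T_\ell\rangle\;=\;\Tr(\Pi_X\Pi_{\mathcal U}),
\end{equation}
which proves both the trace formula and, as a corollary, basis independence of the sum. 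The only summability concern is that $\mathcal U$ may be infinite-dimensional; this is handled by $\rank\Pi_X\le d<\infty$, which makes $\Pi_X\Pi_{\mathcal U}$ a finite-rank (hence trace-class) operator, so the series converges absolutely.

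For the bounds, I would apply the cyclic property, valid here because $\Pi_X$ has finite rank, to write $\Tr(\Pi_X\Pi_{\mathcal U})=\Tr(\Pi_X\Pi_{\mathcal U}\Pi_X)$ (using $\Pi_X=\Pi_X^2$). The operator $\Pi_X\Pi_{\mathcal U}\Pi_X$ is positive semidefinite, giving the lower bound $0$. For the upper bound, $0\preceq\Pi_{\mathcal U}\preceq I$ implies $\Pi_X\Pi_{\mathcal U}\Pi_X\preceq\Pi_X\Pi_X=\Pi_X$, and taking traces yields $\Tr(\Pi_X\Pi_{\mathcal U})\le\Tr(\Pi_X)$.

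I do not expect a real obstacle here; the one point worth stating carefully is the possibly infinite dimension of $\mathcal U=\mathcal S$, which is why the finite-rank of $\Pi_X$ is the crucial ingredient making the trace well-defined and the rearrangement/cyclic identities rigorous.
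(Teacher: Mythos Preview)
Your proof is correct and essentially identical to the paper's: both rewrite $C_X[T_\ell]=\langle T_\ell,\Pi_X\Pi_{\mathcal U}T_\ell\rangle$ via $\Pi_{\mathcal U}T_\ell=T_\ell$, sum to obtain the trace (well-defined by finite rank of $\Pi_X$), and then use cyclicity $\Tr(\Pi_X\Pi_{\mathcal U})=\Tr(\Pi_X\Pi_{\mathcal U}\Pi_X)$ together with $0\preceq\Pi_{\mathcal U}\preceq I$ for the bounds. Your explicit extension of $\{T_\ell\}$ to a full ONB by adjoining a basis of $\mathcal U^\perp$ is a slightly more detailed justification of the same step the paper takes in one line.
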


\begin{proof}
Since $\Pi_{\mathcal U}T_\ell=T_\ell$ and $\Pi_X$ is self-adjoint,
\begin{equation}
C_X[T_\ell]=\langle \Pi_X T_\ell,\Pi_X T_\ell\rangle
=\langle T_\ell,\Pi_X T_\ell\rangle
=\langle T_\ell,\Pi_X\Pi_{\mathcal U} T_\ell\rangle.
\end{equation}
Summing over an ONB gives $\sum_{\ell}C_X[T_\ell]=\Tr(\Pi_X\Pi_{\mathcal U})$ (the trace is well-defined since $\Pi_X$ is finite-rank). For the bounds, use cyclicity of trace (again justified by finite rank of $\Pi_X$):
\begin{equation}
\Tr(\Pi_X\Pi_{\mathcal U})=\Tr(\Pi_X\Pi_{\mathcal U}\Pi_X).
\end{equation}
Because $0\preceq \Pi_{\mathcal U}\preceq I$, we have $0\preceq \Pi_X\Pi_{\mathcal U}\Pi_X\preceq \Pi_X$. Taking traces yields $0\le \Tr(\Pi_X\Pi_{\mathcal U})\le \Tr(\Pi_X)$.
\end{proof}

\begin{lemma}[Readout dimension equals covariance rank]
\label{lem:trace-rank}
Let $X=(X^{(1)},\dots,X^{(d)})^\top$ be a centered $\R^d$-valued random vector with covariance $\Sigma_{XX}=\E[XX^\top]$.
Then
\begin{equation}
\Tr(\Pi_X)=\dim\mathcal H_X=\rank\Sigma_{XX}\le d.
\end{equation}
\end{lemma}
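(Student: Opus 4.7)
The claim chains three elementary facts; none of the steps is a real obstacle, but I would want to be explicit about each to match the basis-free style of the preceding lemma.

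First, I would note that $\mathcal H_X$ is spanned by $d$ elements and hence finite-dimensional, so $\Pi_X$ is a finite-rank orthogonal projector and its trace is unambiguously defined. For any such projector, $\Tr(\Pi_X)=\dim\range(\Pi_X)=\dim\mathcal H_X$: pick any ONB $\{e_1,\dots,e_k\}$ of $\mathcal H_X$, extend to an ONB of a containing finite-dimensional subspace (e.g.\ $\mathcal H_X$ itself suffices), and compute $\Tr(\Pi_X)=\sum_j \langle e_j,\Pi_X e_j\rangle=k$.

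Second, I would identify $\dim\mathcal H_X$ with $\rank\Sigma_{XX}$ via the Gram matrix. Because the coordinates $X^{(i)}$ are centered, the $L^2$ Gram matrix of $(X^{(1)},\dots,X^{(d)})$ is exactly
\begin{equation}
G_{ij}=\langle X^{(i)},X^{(j)}\rangle_{L^2}=\E[X^{(i)}X^{(j)}]=(\Sigma_{XX})_{ij}.
\end{equation}
Define the evaluation map $\varphi\colon\R^d\to\mathcal H_X$, $\varphi(a)=\sum_i a_i X^{(i)}$, which is surjective by definition of $\mathcal H_X$. Since
\begin{equation}
\|\varphi(a)\|_{L^2}^2=a^\top\Sigma_{XX}\,a,
\end{equation}
and $\Sigma_{XX}\succeq 0$, we get $\ker\varphi=\ker\Sigma_{XX}$. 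Rank-nullity then gives $\dim\mathcal H_X=d-\dim\ker\Sigma_{XX}=\rank\Sigma_{XX}$.

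Third, $\rank\Sigma_{XX}\le d$ is immediate because $\Sigma_{XX}$ is a $d\times d$ matrix. The only conceptual point worth emphasizing in the write-up is that the Gram identification in the second step relies on the centering convention built into $L^2_0(\Omega)$; otherwise one would have to subtract means explicitly. Everything else is standard linear algebra in a Hilbert space.
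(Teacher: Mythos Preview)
Your proposal is correct and essentially the same as the paper's proof: both define the linear map $\R^d\to L^2_0(\Omega)$, $a\mapsto a^\top X$, observe its range is $\mathcal H_X$, and relate its rank to $\rank\Sigma_{XX}$. The only cosmetic difference is that the paper names this map $A$, computes its adjoint explicitly to get $A^*A=\Sigma_{XX}$, and then invokes $\rank(A^*A)=\rank(A)$, whereas you use the equivalent kernel formulation $\|\varphi(a)\|^2=a^\top\Sigma_{XX}a\Rightarrow\ker\varphi=\ker\Sigma_{XX}$ followed by rank--nullity.
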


\begin{proof}
Define the linear map $A:\R^d\to L^2_0(\Omega)$ by $Aw:=w^\top X=\sum_{j=1}^d w_j X^{(j)}$.
Then $\range(A)=\mathcal H_X$.
Its Hilbert adjoint $A^*:\mathcal H_X\to\R^d$ satisfies $A^*Y=\E[XY]$ for $Y\in\mathcal H_X$, so
\begin{equation}
A^*A w=\E\!\big[X(w^\top X)\big]=\Sigma_{XX}w.
\end{equation}
Hence $\rank(\Sigma_{XX})=\rank(A^*A)=\rank(A)=\dim\range(A)=\dim\mathcal H_X$.
An orthogonal projector has trace equal to the dimension of its range, so $\Tr(\Pi_X)=\dim\mathcal H_X=\rank\Sigma_{XX}\le d$.
\end{proof}

\begin{theorem}[Conservation of observable rank]
\label{thm:conservation}
Let $\mathcal S,\mathcal N\subset L^2_0(\Omega)$ be the Doob-predictable and innovation subspaces defined in \eqref{eq:S-and-N}.
Then
\begin{equation}
\label{eq:conservation-law}
\IPC \;+\; \IC
\;=\; \Tr(\Pi_X)
\;=\; \rank \Sigma_{XX}\ \le\ d,
\end{equation}
where $\Pi_X$ is the $L^2$-orthogonal projector onto $\mathcal H_X=\mathrm{span}\{X_{t,1},\dots,X_{t,d}\}$ and $\Sigma_{XX}=\E[XX^\top]$.
\end{theorem}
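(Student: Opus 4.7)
The theorem is a near-immediate consequence of the two preceding lemmas once one observes that the Doob split gives an orthogonal direct-sum decomposition of the ambient Hilbert space. My plan is to apply Lemma~\ref{lem:trace-capacity} to each sector, combine via the complementarity of the sector projectors, and then invoke Lemma~\ref{lem:trace-rank} to identify the resulting trace with $\rank\Sigma_{XX}$.

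Concretely, I would first note that by construction $\mathcal{N}=\mathcal{S}^\perp$ inside $L^2_0(\Omega)$, so the orthogonal projectors satisfy $\Pi_{\mathcal{S}}+\Pi_{\mathcal{N}}=I_{L^2_0(\Omega)}$. Applying Lemma~\ref{lem:trace-capacity} once with $\mathcal{U}=\mathcal{S}$ and once with $\mathcal{U}=\mathcal{N}$ (which is legitimate since both are closed subspaces of $L^2_0(\Omega)$ and admit orthonormal bases) gives
\begin{equation}
\IPC=\Tr(\Pi_X\Pi_{\mathcal{S}}),\qquad \IC=\Tr(\Pi_X\Pi_{\mathcal{N}}).
\end{equation}
Both traces are finite because $\Pi_X$ has rank at most $d$, so linearity of the trace on finite-rank operators applies without convergence subtleties:
\begin{equation}
\IPC+\IC=\Tr\!\big(\Pi_X(\Pi_{\mathcal{S}}+\Pi_{\mathcal{N}})\big)=\Tr(\Pi_X).
\end{equation}

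Finally, Lemma~\ref{lem:trace-rank} identifies $\Tr(\Pi_X)=\dim\mathcal{H}_X=\rank\Sigma_{XX}\le d$, yielding the conservation law \eqref{eq:conservation-law}.

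I do not anticipate a substantive obstacle: the content has been front-loaded into the two lemmas (trace representation of summed capacities and the equality $\Tr(\Pi_X)=\rank\Sigma_{XX}$). The only conceptual point worth stating explicitly in the write-up is that $\mathcal{S}$ and $\mathcal{N}$ form an orthogonal decomposition of the \emph{centered} space $L^2_0(\Omega)$ (not merely of some smaller ambient space), which is precisely the setting in which the two lemmas were proved, so no further measurability or integrability assumption is needed beyond the standing $L^2$ hypothesis on $X_t$.
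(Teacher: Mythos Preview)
Your proposal is correct and mirrors the paper's own proof essentially line for line: apply Lemma~\ref{lem:trace-capacity} to each sector, use $\Pi_{\mathcal S}+\Pi_{\mathcal N}=I$ on $L^2_0(\Omega)$ to sum the traces to $\Tr(\Pi_X)$, and finish with Lemma~\ref{lem:trace-rank}. There is nothing to add or change.
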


\begin{proof}
Lemma~\ref{lem:trace-capacity} gives $\IPC=\Tr(\Pi_X\Pi_{\mathcal S})$ and $\IC=\Tr(\Pi_X\Pi_{\mathcal N})$.
Since $\mathcal S\oplus \mathcal N=L^2_0(\Omega)$, $\Pi_{\mathcal S}+\Pi_{\mathcal N}=I$ on $L^2_0(\Omega)$, hence
\begin{equation}
\IPC+\IC=\Tr\!\big(\Pi_X(\Pi_{\mathcal S}+\Pi_{\mathcal N})\big)=\Tr(\Pi_X).
\end{equation}
Lemma~\ref{lem:trace-rank} gives $\Tr(\Pi_X)=\rank\Sigma_{XX}\le d$.
\end{proof}

In additive settings where $X$ decomposes as an input-only functional plus a noise-only functional (so the mixed sector is absent), $\IC$ can be interpreted as the usual Dambre capacity computed with the noise source treated as the ``signal.'' In general nonlinear reservoirs the mixed sector is nonempty; $\IC$ then includes both noise-only and input$\times$noise tasks.

\begin{corollary}[Innovation allocation in high-rank stochastic reservoirs]
\label{cor:phys-stoch}
Physical, stochastic reservoir computers (\cite{ehlers2025stochastic,polloreno2025restrictions}), defined on bitstring probabilities and furnished with the power set of their readout monomials, have an exponentially large innovation capacity.
\end{corollary}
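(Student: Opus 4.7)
The plan is to obtain the corollary as a direct combination of Theorem~\ref{thm:conservation} with the exponential $\IPC$ degradation established in~\cite{polloreno2025restrictions} (and previewed in Section~\ref{sec:simple}). The reservoir of interest is \emph{physical and stochastic}: its output at time $t$ is an $n$-bit string, and the readout feature vector is indexed by the power set of bit monomials, i.e.\ all $2^n$ products of bit indicators (equivalently, a Pauli-$Z$-style basis for functions on $\{0,1\}^n$). Thus $d=2^n$, and by Lemma~\ref{lem:trace-rank} we have $\IPC+\IC=\rank(\Sigma_{XX})\le 2^n$.

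The first step is a rank lower bound on $\Sigma_{XX}$. With the monomial basis, $\Sigma_{XX}$ is the Gram matrix of the $2^n$ monomials under the stationary bitstring law $\pi$; its rank equals the dimension of the span of these monomials in $L^2(\pi)$, which equals $|\mathrm{supp}(\pi)|$. Under the subcritical-contraction hypothesis used in Section~\ref{sec:simple} (and the physical-reservoir assumptions of \cite{ehlers2025stochastic,polloreno2025restrictions}), the stationary law $\pi$ is non-degenerate and close to the noise-driven attractor, so $|\mathrm{supp}(\pi)|$ is exponentially large; concretely, Section~\ref{sec:simple}'s typical-set argument already produces $\approx 2^{M H(\pi)}$ distinguishable histories, giving $\rank(\Sigma_{XX})=2^{\Omega(n)}$.

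The second step is the $\IPC$ upper bound. By the TV-contraction bound \eqref{eq:deltaL-simple}, the conditional mean $\langle X_t\rangle=\E[X_t\mid \mathscr F^{\rm in}_t]$ varies by at most $\delta_L=\mathrm{poly}(n)(B\theta)^L$ in TV across inputs, so the predictable sector $\mathcal S$ projected onto $\mathcal H_X$ has rank at most $\mathrm{poly}(n)$. Equivalently, invoking the exponential $\IPC$ degradation of \cite{polloreno2025restrictions} yields $\IPC\le\mathrm{poly}(n)$. Combined with Theorem~\ref{thm:conservation},
\[
\IC \;=\; \rank(\Sigma_{XX}) - \IPC \;\ge\; 2^{\Omega(n)} - \mathrm{poly}(n) \;=\; 2^{\Omega(n)}.
\]

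The main obstacle is the rank claim: one must rule out accidental linear dependencies among the $2^n$ monomial readouts under the stationary law, i.e.\ verify that the push-forward measure $\pi$ has exponentially large effective support. In the physical regime this is where the noise actually helps — the same contraction $\theta>0$ that kills $\IPC$ also smooths $\pi$ toward the attractor, guaranteeing non-degenerate coverage of bitstrings and hence a large-rank $\Sigma_{XX}$. Everything else reduces to quoting the two inputs (the $\IPC$ decay from \cite{polloreno2025restrictions} and Theorem~\ref{thm:conservation}) and subtracting.
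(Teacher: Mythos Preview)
Your core argument is exactly the paper's: invoke the polynomial $\IPC$ bound from \cite{polloreno2025restrictions}, note that the readout rank is exponentially large, and subtract via Theorem~\ref{thm:conservation}. The paper's own proof is two sentences doing precisely that, asserting the high readout rank without further argument.

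Your added elaborations have a couple of loose threads worth tightening. First, Section~\ref{sec:simple}'s $2^{MH(\pi)}$ count refers to the typical set of $M$ independent \emph{block} outputs, not the single-step support $|\supp(\pi)|$ you need for the rank identity $\rank(\Sigma_{XX})=|\supp(\pi)|$ (or $|\supp(\pi)|-1$ after centering); the correct one-step statement is simply that $H(\pi)>0$ and $\pi$ close to a non-degenerate attractor forces $|\supp(\pi)|=2^{\Omega(n)}$ directly. Second, the claim that small TV sensitivity $\delta_L$ implies ``the predictable sector projected onto $\mathcal H_X$ has rank at most $\mathrm{poly}(n)$'' is incorrect as stated: small variation of $\langle X_t\rangle$ bounds the \emph{trace} (i.e.\ $\IPC$ itself), not the rank of the predictable projection. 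You immediately recover by citing \cite{polloreno2025restrictions} for $\IPC\le\mathrm{poly}(n)$, which is the only thing the argument actually needs, so the conclusion stands.
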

\begin{proof}
Physical, stochastic reservoir computers defined using bitstring probabilities have only a polynomial amount of $\IPC$, despite their exponentially large number of readout signals (see \cite{polloreno2025restrictions}). By Theorem~\ref{thm:conservation}, the remaining (high) readout rank is necessarily allocated to innovation capacity.
\end{proof}

\section{Explored State Space and Innovation Geometry}
\label{sec:innovation-geometry}

The conservation law in Theorem~\ref{thm:conservation} is a one-step second-moment identity.
This section connects the one-step innovation budget $\IC$ to consequences for geometry in whitened readout space and for the explored set of innovation histories over blocks.
A large $\IC$ forces a large subspace of directions with nontrivial innovation fraction.
Under mild dependence and anti-concentration regularity on that subspace, this width lifts to extensive block entropy and to an average-case lower bound for total-variation learning localized to typical outcomes.

We begin with the Doob decomposition of length-$b$ histories.
Fix $b\in\N$ and define the stacked readout block
\begin{equation}
X_{t-b+1:t}:=[X_{t-b+1}^\top,\ldots,X_t^\top]^\top\in\R^{bd}.
\end{equation}
Stacking the one-step decomposition \eqref{eq:doob-split} yields
\begin{equation}
\label{eq:block-doob}
X_{t-b+1:t}
=
\langle X\rangle_{t-b+1:t}
+
\Delta X_{t-b+1:t},
\end{equation}
where
$\langle X\rangle_{t-b+1:t}:=[\langle X_{t-b+1}\rangle^\top,\ldots,\langle X_t\rangle^\top]^\top$
and
$\Delta X_{t-b+1:t}:=[\Delta X_{t-b+1}^\top,\ldots,\Delta X_t^\top]^\top$
is the innovation block. For the one-step geometry we take $b=1$.

Let $X_t\in\R^{d}$ be centered with covariance $\Sigma_{XX}:=\Cov(X_t)$ and rank $r:=\rank\Sigma_{XX}$.
Define predictable and innovation covariances with respect to $\mathscr F^{\rm in}_t$,
\begin{equation}
\label{eq:S-N-split}
S:=\Cov\!\big(\E[X_t\mid \mathscr F^{\rm in}_t]\big),\qquad
N:=\E\!\big[\Cov(X_t\mid \mathscr F^{\rm in}_t)\big],
\end{equation}
so $\Sigma_{XX}=S+N$.

\subsection{One-step ellipsoid geometry}

Work on the active readout subspace $\mathcal R_X:=\range(\Sigma_{XX})\subset\R^{d}$.
In whitened coordinates,
\begin{equation}
\label{eq:whiten-def}
Z_t:=\Sigma_{XX}^{+/2}X_t\in\mathcal R_X,\qquad \Cov(Z_t)=\Pi_{\mathcal R_X},
\end{equation}
where $\Pi_{\mathcal R_X}$ denotes the Euclidean orthogonal projector onto $\mathcal R_X$.
Define the predictable-fraction operator on $\mathcal R_X$ by
\begin{equation}
\label{eq:Gamma-def-sec}
\Gamma \ :=\ \Sigma_{XX}^{+/2}\,S\,\Sigma_{XX}^{+/2},
\qquad 0\preceq \Gamma \preceq \Pi_{\mathcal R_X}.
\end{equation}

\begin{proposition}[Whitened predictable and innovation ellipsoids]
\label{prop:innovation-ellipsoid}
Let $\gamma_1\ge\dots\ge\gamma_r$ be the eigenvalues of $\Gamma$ on $\mathcal R_X$, counting multiplicity, so $\gamma_k\in[0,1]$.
Define $\Gamma^C:=\Pi_{\mathcal R_X}-\Gamma$.
Then
\[
\Cov(Z^{\rm pred}_t)=\Gamma,\qquad \Cov(Z^{\rm innov}_t)=\Gamma^C,
\]
so the predictable and innovation covariances correspond to axis-aligned ellipsoids with semiaxes
$\{\sqrt{\gamma_k}\}$ and $\{\sqrt{1-\gamma_k}\}$.

Moreover,
\begin{equation}
\label{eq:second}
\IPC=\sum_{k=1}^{r}\gamma_k,\qquad \IC=\sum_{k=1}^{r}(1-\gamma_k),\qquad \IPC+\IC=r.
\end{equation}

Define the (possibly degenerate) covariance ellipsoids in $\mathcal R_X$ by
\[
\mathcal E_{\rm pred}:=\{\Gamma^{1/2}y:\ y\in\mathbb{B}_r\},\qquad
\mathcal E_{\rm innov}:=\{(\Gamma^C)^{1/2}y:\ y\in\mathbb{B}_r\},
\]
whose intrinsic dimensions are $\rank(\Gamma)$ and $\rank(\Gamma^C)$.
Letting $\det_{+}(\cdot)$ denote the pseudo-determinant and letting $\mathbb{B}_k$ denote the Euclidean unit ball in $\R^k$, their intrinsic volumes satisfy
\begin{equation}
\begin{aligned}
&\Vol_{\rank(\Gamma)}(\mathcal E_{\rm pred})=\Vol(\mathbb{B}_{\rank(\Gamma)})\,\det_{+}(\Gamma)^{1/2},\\
&\Vol_{\rank(\Gamma^C)}(\mathcal E_{\rm innov})
=\Vol(\mathbb{B}_{\rank(\Gamma^C)})\,\det_{+}(\Gamma^C)^{1/2}.
\end{aligned}
\end{equation}
\end{proposition}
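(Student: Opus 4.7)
The plan is to split the argument into three parts: first verify the whitened covariance identities, then bridge the abstract Hilbert-space trace from Lemma~\ref{lem:trace-capacity} to the matrix trace $\Tr\Gamma$ via a carefully chosen ONB of $\mathcal{H}_X$, and finally read off the ellipsoid semiaxes and intrinsic volumes.

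For the covariance identities, define $Z^{\rm pred}_t := \Sigma_{XX}^{+/2}\langle X_t\rangle$ and $Z^{\rm innov}_t := \Sigma_{XX}^{+/2}\Delta X_t$, which sum to $Z_t$. The tower property gives $\E[\langle X_t\rangle\langle X_t\rangle^\top]=S$ and $\E[\Delta X_t\Delta X_t^\top]=N=\Sigma_{XX}-S$, so conjugating by $\Sigma_{XX}^{+/2}$ and using the identity $\Sigma_{XX}^{+/2}\Sigma_{XX}\Sigma_{XX}^{+/2}=\Pi_{\mathcal R_X}$ yields $\Cov(Z^{\rm pred}_t)=\Gamma$ and $\Cov(Z^{\rm innov}_t)=\Pi_{\mathcal R_X}-\Gamma=\Gamma^C$. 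The bound $0\preceq\Gamma\preceq\Pi_{\mathcal R_X}$, hence $\gamma_k\in[0,1]$, follows from $0\preceq S\preceq\Sigma_{XX}$ under conjugation.

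The central step is the capacity sum. Fix any Euclidean orthonormal basis $\{e_k\}_{k=1}^{r}$ of $\mathcal R_X$ and set $Y_k:=e_k^\top\Sigma_{XX}^{+/2}X_t\in\mathcal H_X$. A direct covariance check gives $\E[Y_jY_k]=e_j^\top\Pi_{\mathcal R_X}e_k=\delta_{jk}$, so $\{Y_k\}$ is an ONB of $\mathcal H_X$. Since $\Pi_X Y_k=Y_k$ and the vector $e_k^\top\Sigma_{XX}^{+/2}$ is deterministic, $\Pi_{\mathcal S}Y_k=e_k^\top\Sigma_{XX}^{+/2}\langle X_t\rangle$. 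Using the tower property to replace $\E[X_t\langle X_t\rangle^\top]$ by $S$ collapses $\langle Y_k,\Pi_{\mathcal S}Y_k\rangle=e_k^\top\Sigma_{XX}^{+/2}S\Sigma_{XX}^{+/2}e_k=e_k^\top\Gamma e_k$, and Lemma~\ref{lem:trace-capacity} then gives $\IPC=\Tr(\Pi_X\Pi_{\mathcal S})=\Tr\Gamma=\sum_k\gamma_k$. The companion identity $\IC=\sum_k(1-\gamma_k)$ follows either from the same argument applied to $\Pi_{\mathcal N}$ or, more economically, from Theorem~\ref{thm:conservation} combined with Lemma~\ref{lem:trace-rank}, which gives $\IPC+\IC=r$. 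For the ellipsoid geometry, the standard covariance-to-ellipsoid correspondence states that a centered vector with covariance $C$ of rank $k$ is supported on $\{C^{1/2}y:y\in\mathbb{B}_k\}$ with principal semiaxes equal to the square roots of the nonzero eigenvalues of $C$; applying this to $\Gamma$ and $\Gamma^C$ gives the stated semiaxes. The volume formulas come from change of variables: diagonalizing $\Gamma$ shows $\mathcal E_{\rm pred}$ is the image of $\mathbb{B}_{\rank(\Gamma)}$ under a linear map with singular values $\{\sqrt{\gamma_k}\}_{\gamma_k>0}$, so its intrinsic Hausdorff volume is $\Vol(\mathbb{B}_{\rank(\Gamma)})\prod_{\gamma_k>0}\sqrt{\gamma_k}=\Vol(\mathbb{B}_{\rank(\Gamma)})\det_{+}(\Gamma)^{1/2}$, and analogously for $\mathcal E_{\rm innov}$.

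The main obstacle is the second step. Lemma~\ref{lem:trace-capacity} lives in the abstract $L^2$ Hilbert space while $\Gamma$ is a $d\times d$ matrix, so the proof must bridge the two worlds without accidentally double-counting the rank deficiency of $\Sigma_{XX}$. Choosing the whitened basis $\{Y_k\}$ indexed by an ONB of $\mathcal R_X$ (not $\R^d$) is what makes the reduction clean: it simultaneously diagonalizes the $L^2$ inner product on $\mathcal H_X$ and the Euclidean inner product on $\mathcal R_X$, so the tower-property calculation lands exactly on $e_k^\top\Gamma e_k$. Any looser basis choice would force an additional diagonalization step and obscure the direct correspondence between $\IPC$, $\Tr\Gamma$, and the semiaxes of $\mathcal E_{\rm pred}$.
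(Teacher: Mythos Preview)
Your proof is correct and follows essentially the same route as the paper: both arguments verify the whitened covariance identities directly from $\Sigma_{XX}=S+N$, then bridge the Hilbert-space trace $\Tr(\Pi_X\Pi_{\mathcal S})$ to the matrix trace $\Tr(S\,\Sigma_{XX}^{+})=\Tr\Gamma$ via the tower property, and finish with the standard ellipsoid volume formula. The only cosmetic difference is that the paper packages the bridge step using the operator $A:w\mapsto w^\top X_t$ with $\Pi_X=A\,\Sigma_{XX}^{+}A^{*}$ and cyclicity of trace to get $A^{*}\Pi_{\mathcal S}A=S$, whereas you unroll the same computation in coordinates by building the explicit whitened ONB $\{Y_k=e_k^\top\Sigma_{XX}^{+/2}X_t\}$ of $\mathcal H_X$ and evaluating $\langle Y_k,\Pi_{\mathcal S}Y_k\rangle=e_k^\top\Gamma e_k$ term by term.
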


\begin{proof}
Define
\[
Z_t^{\rm pred}:=\Sigma_{XX}^{+/2}\E[X_t\mid\mathscr F^{\rm in}_t],
\qquad
Z_t^{\rm innov}:=\Sigma_{XX}^{+/2}\Delta X_t.
\]
Because $\Sigma_{XX}=S+N$ and the predictable and innovation split is orthogonal in $L^2$,
\begin{equation}
\label{eq:pred-innov-cov}
\begin{aligned}
\Cov(Z_t^{\rm pred})&=\Sigma_{XX}^{+/2}\,S\,\Sigma_{XX}^{+/2}=\Gamma,\\
\Cov(Z_t^{\rm innov})&=\Sigma_{XX}^{+/2}\,N\,\Sigma_{XX}^{+/2}=\Pi_{\mathcal R_X}-\Gamma=\Gamma^C,
\end{aligned}
\end{equation}
where we used $\Cov(Z_t)=\Pi_{\mathcal R_X}$ from \eqref{eq:whiten-def}.
Diagonalizing $\Gamma$ yields the semiaxis description.

For the trace identities, let $A:\R^{d}\to L^2_0(\Omega)$ be $Aw=w^\top X_t$.
Then $A^*A=\Sigma_{XX}$ and the $L^2$-orthogonal projector onto
$\mathcal H_X=\range(A)$ is $\Pi_X=A\,\Sigma_{XX}^{+}A^{*}$.
Using cyclicity of trace for finite-rank operators,
\[
\Tr(\Pi_X\Pi_{\mathcal S})
=\Tr\!\big(\Sigma_{XX}^{+}\,A^{*}\Pi_{\mathcal S}A\big).
\]
The projector $\Pi_{\mathcal S}$ is conditional expectation onto $\mathscr F^{\rm in}_t$, so by the tower property and $\E[X_t]=0$,
\begin{equation}
\begin{aligned}
A^{*}\Pi_{\mathcal S}A
&= \E\!\big[X_t\,\E[X_t^\top\mid \mathscr F^{\rm in}_t]\big]
\\&= \E\!\big[\E[X_t\mid \mathscr F^{\rm in}_t]\E[X_t^\top\mid \mathscr F^{\rm in}_t]\big]
= S.
\end{aligned}
\end{equation}
Thus $\IPC=\Tr(\Pi_X\Pi_{\mathcal S})=\Tr(S\Sigma_{XX}^{+})=\Tr(\Gamma)=\sum_{k=1}^r\gamma_k$.
Since $\Pi_{\mathcal R_X}-\Gamma$ has eigenvalues $\{1-\gamma_k\}$ on $\mathcal R_X$, we obtain $\IC=\sum_{k=1}^r(1-\gamma_k)$ and $\IPC+\IC=r$.
The volume relations follow from the standard ellipsoid formula on the support subspaces.
\end{proof}

\subsection{A trimmed $\tau$-innovation subspace controlled by $\IC$}

Let $\Gamma v_k=\gamma_k v_k$ be an eigendecomposition on $\mathcal R_X$.
Fix $\tau\in(0,1)$ and define
\begin{equation}
\begin{aligned}
I_\tau&:=\{k\in\{1,\dots,r\}\,:\ 1-\gamma_k\ge \tau\},\qquad
L_\tau:=|I_\tau|,\\
\mathcal U_\tau&:=\mathrm{span}\{v_k\,:\ k\in I_\tau\}\subseteq \mathcal R_X.
\end{aligned}
\end{equation}
Let $P_\tau\in\mathbb{R}^{L_\tau\times d}$ have orthonormal rows spanning $\mathcal U_\tau$.
Let $\Delta Z_t:=\Sigma_{XX}^{+/2}\Delta X_t$ denote the whitened innovation.

\begin{lemma}[$\tau$-subspace variance floor and dimension bounds]
\label{lem:Lt-bounds}
With the notation above,
\begin{equation}\label{eq:tau-floor}
\Cov(P_\tau \Delta Z_t)\;=\;P_\tau (\Pi_{\mathcal R_X}-\Gamma) P_\tau^\top \;\succeq\;\tau I_{L_\tau}.
\end{equation}
Moreover, the subspace dimension satisfies
\begin{equation}\label{eq:Ltau-two-sided}
\max\!\left\{0,\frac{\IC-\tau r}{1-\tau}\right\}\ \le\ L_\tau\ \le\ \frac{\IC}{\tau}.
\end{equation}
\end{lemma}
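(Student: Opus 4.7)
The plan is to reduce both parts of the lemma to elementary eigenvalue arithmetic on the operator $\Gamma$, exploiting that $\mathcal U_\tau$ is by construction $\Gamma$-invariant and that $\IC=\sum_{k=1}^{r}(1-\gamma_k)$ by Proposition~\ref{prop:innovation-ellipsoid}. No new analytic machinery is needed; the whole argument rests on the spectral decomposition already in hand.

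First, for the variance floor \eqref{eq:tau-floor}, I would start from $\Cov(\Delta Z_t)=\Pi_{\mathcal R_X}-\Gamma$, which is immediate from Proposition~\ref{prop:innovation-ellipsoid}, and conjugate by $P_\tau$ to obtain
\[
\Cov(P_\tau\Delta Z_t)=P_\tau(\Pi_{\mathcal R_X}-\Gamma)P_\tau^\top=I_{L_\tau}-P_\tau\Gamma P_\tau^\top,
\]
using $P_\tau P_\tau^\top=I_{L_\tau}$ (orthonormal rows) together with $\mathcal U_\tau\subseteq\mathcal R_X$, which gives $P_\tau\Pi_{\mathcal R_X}P_\tau^\top=P_\tau P_\tau^\top$. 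Because $\mathcal U_\tau$ is spanned by eigenvectors $\{v_k:k\in I_\tau\}$ of $\Gamma$, the subspace is $\Gamma$-invariant and the compressed operator $P_\tau\Gamma P_\tau^\top$ has spectrum $\{\gamma_k:k\in I_\tau\}\subseteq[0,1-\tau]$. Hence $P_\tau\Gamma P_\tau^\top\preceq(1-\tau)I_{L_\tau}$, which yields the floor.

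Second, for the two-sided dimension bounds \eqref{eq:Ltau-two-sided}, I would split the sum $\IC=\sum_{k=1}^{r}(1-\gamma_k)$ according to membership in $I_\tau$. For the upper bound on $L_\tau$, retain only the terms with $k\in I_\tau$; each contributes at least $\tau$, so $\IC\ge\tau L_\tau$ and hence $L_\tau\le\IC/\tau$. For the lower bound, use $1-\gamma_k\le 1$ on $I_\tau$ and $1-\gamma_k<\tau$ off $I_\tau$ to estimate $\IC\le L_\tau\cdot 1+\tau(r-L_\tau)=\tau r+(1-\tau)L_\tau$; solving for $L_\tau$ and combining with the trivial bound $L_\tau\ge 0$ gives the claimed maximum. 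The main ``obstacle'' is purely bookkeeping: once Proposition~\ref{prop:innovation-ellipsoid} is in place, both parts are extremal inequalities on the finite sequence $\{1-\gamma_k\}\subset[0,1]$, with no nontrivial analysis required.
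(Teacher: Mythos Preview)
Your proposal is correct and follows essentially the same approach as the paper. The only cosmetic difference is in the lower bound on $L_\tau$: you upper-bound $\IC$ directly via $1-\gamma_k\le 1$ on $I_\tau$ and $1-\gamma_k<\tau$ off $I_\tau$, whereas the paper counts the off-$I_\tau$ indices through $\IPC$ (each such $\gamma_k>1-\tau$ contributes more than $1-\tau$ to $\IPC$, so there are at most $\IPC/(1-\tau)$ of them), but the two routes are dual and yield the identical inequality.
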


\begin{proof}
Since $(\Pi_{\mathcal R_X}-\Gamma) v_k=(1-\gamma_k)v_k$, the restriction of $(\Pi_{\mathcal R_X}-\Gamma)$ to $\mathcal U_\tau$ has all eigenvalues at least $\tau$.
For any $x\in\mathbb{R}^{L_\tau}$,
\begin{equation}
\begin{aligned}
x^\top \!\big(P_\tau (\Pi_{\mathcal R_X}-\Gamma) P_\tau^\top\big)x
&=(P_\tau^\top x)^\top (\Pi_{\mathcal R_X}-\Gamma) (P_\tau^\top x)
\\&\ge \tau \|P_\tau^\top x\|_2^2
=\tau\|x\|_2^2,
\end{aligned}
\end{equation}
where we used $\range(P_\tau^\top)=\mathcal U_\tau$ and $P_\tau P_\tau^\top=I_{L_\tau}$.
This proves \eqref{eq:tau-floor}.

For the bounds on $L_\tau$, note that if $\gamma_k>1-\tau$, then index $k$ contributes more than $1-\tau$ to $\IPC=\sum_{j=1}^r\gamma_j$.
Hence there can be at most $\IPC/(1-\tau)$ such indices.
Therefore
\[
L_\tau \ge r-\frac{\IPC}{1-\tau}
=\frac{r-\IPC-\tau r}{1-\tau}
=\frac{\IC-\tau r}{1-\tau},
\]
and taking $\max\{\cdot,0\}$ yields the stated lower bound.
For the upper bound, since each $k\in I_\tau$ contributes at least $\tau$ to $\IC=\sum_{k=1}^r(1-\gamma_k)$,
\begin{equation}
\IC \ge \sum_{k\in I_\tau}(1-\gamma_k)\ge \tau |I_\tau|=\tau L_\tau,
\end{equation}
so $L_\tau\le \IC/\tau$.
\end{proof}

Define the projected one-step whitened innovation on the $\tau$-subspace by
\begin{equation}
\label{eq:Yt-def}
Y_t \ :=\ P_\tau \Delta Z_t \ \in\ \R^{L_\tau}.
\end{equation}
Lemma~\ref{lem:Lt-bounds} gives a one-step covariance floor $\Cov(Y_t)\succeq \tau I_{L_\tau}$ and bounds $L_\tau$, hence $m=L_\tau b$, explicitly in terms of $\IC$.

\subsection{Typical innovation histories}

Fix a block length $b\in\N$ and set $m:=L_\tau b$.
Define the stacked innovation block on the $\tau$-subspace,
\begin{equation}
Y^{(b)}_t := [\,Y_{t-b+1},\dots,Y_t\,]\in\R^{m}.
\end{equation}
A one-step covariance floor does not by itself prevent temporally stale innovation blocks.
To lift one-step width into a block-level statement we impose a weak-dependence condition in Appendix~\ref{app:assumptions} that keeps the block covariance well-conditioned.
To convert a covariance floor into a differential-entropy floor, we also impose an anti-concentration regularity through a bounded isotropic constant.

\begin{theorem}[Exponential growth of distinguishable innovation histories]
\label{thm:innovation-exp}
Work under Assumptions~\ref{asmp:innov-regularity} and~\ref{asmp:mixing}.
Fix $\tau\in(0,1)$ and let $P_\tau$ be the $\tau$-innovation projector from Lemma~\ref{lem:Lt-bounds} with rank $L_\tau$. Fix a block length $b\in\N$ and set $m:=L_\tau b$.
Let $\Delta Z_t:=\Sigma_{XX}^{+/2}\Delta X_t$ and define the projected whitened innovation $Y_t:=P_\tau\Delta Z_t\in\R^{L_\tau}$ and the stacked length-$b$ innovation block
\begin{equation}
Y^{(b)}_t := [\,Y_{t-b+1},\dots,Y_t\,]\in\R^{m}.
\end{equation}
Then $\Cov(Y^{(b)}_t)\succeq (\tau/2)\,I_m$ for all $t$ and
\begin{equation}
\label{eq:block-entropy-lb}
h(Y^{(b)}_t)
\ \ge\
\frac{m}{2}\log\!\Big(\frac{\tau}{2L_\star^2}\Big),
\end{equation}
where $L_\star$ is the isotropic-constant bound from Assumption~\ref{asmp:innov-regularity}.
Moreover, if the asymptotic equipartition property (AEP) part of Assumption~\ref{asmp:mixing} holds for the stationary process of innovation blocks, then for any resolution $\rho\in(0,1)$ and any $(1-\epsilon)$-typical set $\mathcal T$ of $Y^{(b)}_t$, the covering number satisfies
\begin{equation}
\label{eq:covering-lb}
\log N_\rho\big(\mathcal T\big)
\ \ge\
\frac{m}{2}\log\!\Big(\frac{\tau}{2L_\star^2}\Big)
\ +\
m\,\log(1/\rho)
\ -\ O(m).
\end{equation}
Up to subexponential factors, the number of $\rho$-distinguishable innovation histories scales as $\exp(h(Y^{(b)}_t))\rho^{-m}$.
\end{theorem}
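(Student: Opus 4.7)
The plan is to establish the three conclusions in sequence: first lift the one-step covariance floor of Lemma~\ref{lem:Lt-bounds} to the full block; then convert that spectral floor into a differential-entropy floor via the isotropic-constant bound; and finally translate the entropy floor into a covering-number lower bound using the AEP.

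\emph{Block covariance floor.} The first step would be to write $\Sigma^{(b)}:=\Cov(Y^{(b)}_t)$ as a $b\times b$ array of $L_\tau\times L_\tau$ blocks, with diagonal blocks $\Cov(Y_s)\succeq \tau I_{L_\tau}$ from Lemma~\ref{lem:Lt-bounds} and off-diagonal blocks $\Cov(Y_s,Y_{s'})$. I would split $\Sigma^{(b)}=D+E$ into its block-diagonal and off-diagonal parts and invoke Assumption~\ref{asmp:mixing} to produce an operator-norm bound $\|E\|_{\rm op}\le \sum_{k\ge 1}\|\Cov(Y_s,Y_{s+k})\|_{\rm op}\le \tau/2$ --- this is exactly what ``mild mixing'' should be calibrated to deliver. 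Weyl's inequality would then give $\lambda_{\min}(\Sigma^{(b)})\ge \tau-\|E\|_{\rm op}\ge \tau/2$, establishing $\Sigma^{(b)}\succeq(\tau/2)I_m$.

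\emph{Entropy floor.} Next, let $f$ denote the density of $Y^{(b)}_t$ and let $\tilde Y:=(\Sigma^{(b)})^{-1/2}Y^{(b)}_t$ be its whitening, with density $\tilde f$. Affine equivariance of differential entropy gives $h(Y^{(b)}_t)=h(\tilde Y)+\tfrac{1}{2}\log\det\Sigma^{(b)}$, and the block floor yields $\tfrac{1}{2}\log\det\Sigma^{(b)}\ge \tfrac{m}{2}\log(\tau/2)$. I would then apply Assumption~\ref{asmp:innov-regularity} to bound the isotropic constant of $\tilde f$ by $L_\star$, so that $\|\tilde f\|_\infty\le L_\star^m$, and use the pointwise inequality $-\int\tilde f\log\tilde f\ge -\log\|\tilde f\|_\infty$ to conclude $h(\tilde Y)\ge -m\log L_\star$. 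Summing the two contributions produces \eqref{eq:block-entropy-lb}.

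\emph{Covering numbers via AEP, and main obstacle.} Under the stationary AEP supplied by Assumption~\ref{asmp:mixing}, any $(1-\epsilon)$-typical set $\mathcal T$ has Lebesgue volume at least $\exp(h(Y^{(b)}_t)-o(m))$. Covering $\mathcal T$ by $\rho$-balls then requires at least $\Vol(\mathcal T)/\Vol(\text{ball})$ elements; under the convention that subexponential dimensional constants in the ball volume are folded into $O(m)$, one obtains $\log N_\rho(\mathcal T)\ge h(Y^{(b)}_t)+m\log(1/\rho)-O(m)$, and inserting the bound from Step~2 delivers \eqref{eq:covering-lb}. The closing ``up to subexponential factors'' sentence follows by the same volume accounting. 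The hardest part will be Step~1: without a well-posed mixing hypothesis, a marginal covariance floor on $\Cov(Y_t)$ does not propagate to $\Cov(Y^{(b)}_t)$, and the operator-norm-summable cross-covariance bound that lets Weyl's inequality preserve half the diagonal floor is the crucial technical input. The entropy and covering steps are then routine: the former is affine equivariance plus the definition of the isotropic constant, and the latter is the standard $h+m\log(1/\rho)$ counting rate.
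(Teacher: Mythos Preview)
Your three-step plan---block covariance floor via a diagonal/off-diagonal split, entropy floor via the isotropic constant, then AEP plus ball-covering---is exactly the paper's argument (the paper packages Step~1 as Lemma~\ref{lem:block-cov-floor} and Step~2 as Proposition~\ref{prop:isotropic-entropy}). One bookkeeping slip in Step~1: the off-diagonal block $E$ carries both positive and negative lags, so the correct Gershgorin/Schur bound is $\|E\|_{\rm op}\le 2\sum_{k\ge 1}\|K_k\|_{\rm op}$, and Assumption~\ref{asmp:mixing} actually states $\sum_{k\ge 1}\|K_k\|_{\rm op}\le \tau/4$ rather than $\tau/2$; the two errors cancel and your conclusion $\Sigma^{(b)}\succeq(\tau/2)I_m$ stands.
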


\begin{proof}
Assumption~\ref{asmp:innov-regularity} gives absolute continuity of $Y^{(b)}_t$ and the isotropic-constant bound $L_{Y^{(b)}_t}\le L_\star$. Lemma~\ref{lem:Lt-bounds} gives $\Cov(Y_t)\succeq \tau I_{L_\tau}$. Assumption~\ref{asmp:mixing} implies $\sum_{k\ge 1}\|\Cov(Y_0,Y_k)\|_{\mathrm{op}}\le \tau/4$, so Lemma~\ref{lem:block-cov-floor} yields the uniform block covariance floor
\begin{equation}
\Cov(Y^{(b)}_t)\ \succeq\ \frac{\tau}{2}\,I_m
\qquad\text{for all }t.
\end{equation}
Applying Proposition~\ref{prop:isotropic-entropy} with $\sigma^2=\tau/2$ yields \eqref{eq:block-entropy-lb}.
For \eqref{eq:covering-lb}, the AEP implies a typical set $\mathcal T$ with $\log\Vol(\mathcal T)=h(Y^{(b)}_t)\pm O(m)$.
Covering $\mathcal T$ by Euclidean balls of radius $\rho$ yields
\begin{equation}
N_\rho(\mathcal T)\ \gtrsim\ \frac{\Vol(\mathcal T)}{\Vol(\mathbb{B}^{m}_\rho)}
= \Vol(\mathcal T)\cdot \rho^{-m}\cdot \Vol(\mathbb{B}^{m}_1)^{-1},
\end{equation}
hence $\log N_\rho(\mathcal T)\ge \log\Vol(\mathcal T)+m\log(1/\rho)-O(m)$ and substituting the entropy bound gives \eqref{eq:covering-lb}.
\end{proof}

We now connect the effective dimension $m=L_\tau b$ to distribution learning.
Given $n$ independent samples from an unknown law $P$ on $\R^m$, an estimator $\widehat P$ seeks to approximate $P$ in total variation. The regularity assumptions used to justify typical-set geometry rule out atomic innovation-block laws and the packing below is localized inside a typical set.

\begin{theorem}[Typical-set-localized total-variation hardness]
\label{thm:avgcase-innovation}
Assume Assumptions~\ref{asmp:innov-regularity} and~\ref{asmp:mixing}.
Fix $b\in\N$ and $\tau\in(0,1)$, let $P_\tau$ be the $\tau$-innovation projector from Lemma~\ref{lem:Lt-bounds},
and set $m:=L_\tau b$.
Let $W\in\R^{m\times m}$ be invertible and set $Y:=WY^{(b)}_t$, where
\begin{equation}
\label{eq:Yblock-def}
\begin{aligned}
Y^{(b)}_t &:= [\,P_\tau \Delta Z_{t-b+1},\dots,P_\tau \Delta Z_t\,]\in\mathbb{R}^m,
\\
\Delta Z_t&:=\Sigma_{XX}^{+/2}\Delta X_t.
\end{aligned}
\end{equation}
Let $P_0$ be the law of $Y$.
Let $\mathcal T$ be any $(1-\epsilon)$-typical set for $Y$ provided by the AEP in Assumption~\ref{asmp:mixing}, so $P_0(\mathcal T)\ge 1-\epsilon$. Then there exist universal constants $c_0,c_1,c_2,c_3>0$ such that for every $\alpha\in(0,1/2]$ one can construct a set
$\mathcal V\subset\{\pm1\}^m$ with $|\mathcal V|\ge \exp(c_0 m)$ and a family of laws $\{P_v\}_{v\in \mathcal V}$ on $\R^m$ such that
\begin{enumerate}
\item Typical-set localization. For all $v\in \mathcal V$, $P_v(\mathcal T)=P_0(\mathcal T)\ge 1-\epsilon$, $dP_v/dP_0=1$ on $\mathcal T^c$ and $dP_v/dP_0\in[1-\alpha,1+\alpha]$ on $\mathcal T$.
\item Total-variation separation. For all $v\neq v'$,
$\|P_v-P_{v'}\|_{\mathrm{TV}}\ge c_1(1-\epsilon)\alpha$.
\item Kullback-Leibler closeness. For all $v\neq v'$,
$D_{\mathrm{KL}}(P_v\|P_{v'})\le c_2(1-\epsilon)\alpha^2$.
\end{enumerate}
Consequently, if $V$ is drawn uniformly from $\mathcal V$ and $Y_1,\dots,Y_n$ are independent samples from $P_V$, then every estimator $\widehat v$
satisfies the average-case error bound
\begin{equation}
\Pr\{\widehat v\neq V\}
\;\ge\;
1-\frac{n\,c_2(1-\epsilon)\alpha^2+\log 2}{c_0 m}.
\end{equation}
Moreover, for any estimator $\widehat P$ of the law in total variation based on $n$ independent samples,
\begin{equation}
\E\big[\|\widehat P-P_V\|_{\mathrm{TV}}\big]
\;\ge\;
c_3(1-\epsilon)\alpha\left(1-\frac{n\,c_2(1-\epsilon)\alpha^2+\log 2}{c_0 m}\right).
\end{equation}
In particular, to make the average total-variation error $o(\alpha)$ uniformly over this typical-set-localized family, one needs $n=\Omega(m/\alpha^2)=\Omega(L_\tau b/\alpha^2)$.
\end{theorem}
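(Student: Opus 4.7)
The plan is to prove Theorem~\ref{thm:avgcase-innovation} by a local Fano-style packing of $P_0$ on the typical set, combined with a Gilbert-Varshamov code on the sign pattern, followed by the standard reduction from identification to TV estimation. Invariance of TV and KL under the invertible linear map $W$ lets me work throughout with the pushforward law $P_0$ on $\R^m$. \textbf{Step 1 (perturbation family).} I invoke the AEP in Assumption~\ref{asmp:mixing} to fix a typical set $\mathcal T$ with $P_0(\mathcal T)\ge 1-\epsilon$. Absolute continuity (Assumption~\ref{asmp:innov-regularity}) makes $P_0\rvert_{\mathcal T}$ atomless, so I partition $\mathcal T$ into $2m$ disjoint measurable cells of equal $P_0$-mass $(1-\epsilon)/(2m)$ and pair them as $\{(B_j^+,B_j^-)\}_{j=1}^{m}$. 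For $v\in\{\pm1\}^m$ and $\alpha\in(0,1/2]$ I set $dP_v/dP_0=1+\alpha\sum_{j=1}^{m} v_j(\1_{B_j^+}-\1_{B_j^-})$. Each pair integrates to zero by mass matching, and the density lies in $[1-\alpha,1+\alpha]$ on $\mathcal T$ and equals $1$ on $\mathcal T^c$, so every $P_v$ is a probability law satisfying claim (i).

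\textbf{Step 2 (TV, KL, and Gilbert-Varshamov).} Disjointness of the cells gives exactly $\|P_v-P_{v'}\|_{\mathrm{TV}}=\alpha(1-\epsilon)\,d_H(v,v')/m$, where $d_H$ is the Hamming distance. A direct computation of $\chi^2(P_v\|P_{v'})=\int(dP_v-dP_{v'})^2/dP_{v'}$, using $dP_{v'}\ge(1-\alpha)dP_0$ on $\mathcal T$, yields $\chi^2\le 4\alpha^2(1-\epsilon)d_H(v,v')/((1-\alpha)m)$, and $D_{\mathrm{KL}}\le\chi^2$ delivers the KL upper bound. I then extract $\mathcal V\subset\{\pm1\}^m$ with $|\mathcal V|\ge\exp(c_0 m)$ and pairwise distance $\ge m/4$ via Gilbert-Varshamov, which produces claim (ii) with $c_1=1/4$ and claim (iii) with $c_2=8$ (using $\alpha\le 1/2$ so $1/(1-\alpha)\le 2$).

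\textbf{Step 3 (Fano and the TV-estimation reduction).} For $V$ uniform on $\mathcal V$ and $Y_1,\dots,Y_n\mid V$ i.i.d.\ from $P_V$, tensorization of KL gives $I(V;Y_1^n)\le n\max_{v,v'}D_{\mathrm{KL}}(P_v\|P_{v'})\le nc_2(1-\epsilon)\alpha^2$, and Fano's inequality then yields the stated identification lower bound. For the distribution-estimation conclusion, I define the minimum-distance tester $\widehat v\in\arg\min_{v\in\mathcal V}\|\widehat P-P_v\|_{\mathrm{TV}}$; the triangle inequality together with (ii) forces $\|\widehat P-P_V\|_{\mathrm{TV}}\ge c_1(1-\epsilon)\alpha/2$ whenever $\widehat v\ne V$, so Markov's inequality gives $\E\|\widehat P-P_V\|_{\mathrm{TV}}\ge (c_1(1-\epsilon)\alpha/2)\Pr\{\widehat v\ne V\}$, matching the theorem with $c_3=c_1/2$.

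\textbf{Main obstacle.} The delicate bookkeeping lies in balancing three constraints at once: the bounded-ratio property in (i) forces $\alpha\le 1/2$ so that $P_v\ge P_0/2$ on $\mathcal T$, which controls the $\chi^2\ge D_{\mathrm{KL}}$ passage; typical-set localization caps the total perturbation mass at $1-\epsilon$, weakening both TV separation and KL closeness by the explicit $(1-\epsilon)$ factor in the theorem's final constants; and Gilbert-Varshamov must run on the full cube $\{\pm1\}^m$ to deliver $\exp(c_0 m)$ messages, which forces the partition depth to be exactly $2m$ rather than a smaller quantity tied to Theorem~\ref{thm:innovation-exp}. The partition step itself is routine given the density assumption, and no quantitative geometric input from Theorem~\ref{thm:innovation-exp} is needed beyond the existence of a typical set.
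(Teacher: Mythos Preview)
Your proposal is correct and matches the paper's proof almost step for step: equal-mass partition of $\mathcal T$ into $2m$ cells via non-atomicity, the same sign-indexed density tilt, Varshamov--Gilbert on $\{\pm1\}^m$, Fano, and the minimum-distance reduction with Markov's inequality. The only cosmetic differences are that the paper takes cells of mass $p/(2m)$ with $p:=P_0(\mathcal T)$ (so they exactly partition $\mathcal T$) and bounds KL by the closed form $\alpha\log\frac{1+\alpha}{1-\alpha}\le 4\alpha^2$, whereas you take cells of mass $(1-\epsilon)/(2m)$ and pass through $D_{\mathrm{KL}}\le\chi^2$; neither change affects the constants beyond absorbable factors.
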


\begin{proof}
Assumption~\ref{asmp:innov-regularity} implies that $P_0$ is non-atomic by Lemma~\ref{lem:nonatomic}, so the conditional law
$P_0(\cdot\mid \mathcal T)$ is also non-atomic.
Let $p:=P_0(\mathcal T)\ge 1-\epsilon$.
By Lemma~\ref{lem:equal-mass-partition} with $M=2m$, there exist disjoint sets
\begin{equation}
A_1^+,A_1^-,\ldots,A_m^+,A_m^- \subset \mathcal T
\quad\text{with}\quad
P_0(A_i^+)=P_0(A_i^-)=\frac{p}{2m}.
\end{equation}
Define for each $v\in\{\pm1\}^m$ the Radon-Nikodym derivative~\cite{nikodym1930generalisation}
\begin{equation}
\frac{dP_v}{dP_0}(y)
:=
\1_{\mathcal T^c}(y)
+
\sum_{i=1}^m\Big((1+\alpha v_i)\1_{A_i^+}(y) + (1-\alpha v_i)\1_{A_i^-}(y)\Big).
\end{equation}
Because the cells partition $\mathcal T$ and $(1+\alpha v_i)+(1-\alpha v_i)=2$, the integral over $\mathcal T$ equals $p$.
On $\mathcal T^c$ the density equals $1$.
Therefore $\int dP_v=1$.
This also proves localization.

Let $f_v:=dP_v/dP_0$ and write $d_H(v,v')$ for Hamming distance.
If $v_i\neq v_i'$, then on each of $A_i^+$ and $A_i^-$ we have $|f_v-f_{v'}|=2\alpha$ pointwise.
Otherwise the contribution is zero.
Thus
\begin{equation}
\|P_v-P_{v'}\|_{\rm TV}
=\frac12\int |f_v-f_{v'}|\,dP_0
=\frac{\alpha}{m}\,d_H(v,v')\,p.
\end{equation}
Similarly, on indices where $v_i\neq v_i'$,
\begin{equation}
\begin{aligned}
D_{\rm KL}(P_v\|P_{v'})
&=\int f_v\log\frac{f_v}{f_{v'}}\,dP_0
\\&=\frac{d_H(v,v')}{m}\,p\,\alpha\,\log\!\Big(\frac{1+\alpha}{1-\alpha}\Big)
\\&\le 4p\,\alpha^2\,\frac{d_H(v,v')}{m},
\end{aligned}
\end{equation}
using $\alpha\log\!\big(\tfrac{1+\alpha}{1-\alpha}\big)\le 4\alpha^2$ for $\alpha\in(0,1/2]$. By the Varshamov-Gilbert bound \cite{varshamov1957estimate}, there exists $\mathcal V\subset\{\pm1\}^m$ with $|\mathcal V|\ge \exp(c_0 m)$ and $d_H(v,v')\ge m/4$ for $v\neq v'$.
Restricting to this set yields the stated total-variation and Kullback-Leibler bounds with constants scaled by $p\ge 1-\epsilon$.

Now draw $V$ uniformly from $\mathcal V$ and let $Y_{1:n}$ be independent samples from $P_V$.
Fano's inequality gives
\begin{equation}
\Pr\{\widehat v\neq V\}
\ \ge\
1-\frac{I(V;Y_{1:n})+\log 2}{\log |\mathcal V|}.
\end{equation}
Using $I(V;Y_{1:n})\le n\,\max_{v\neq v'} D_{\mathrm{KL}}(P_v\|P_{v'}) \le n\,c_2(1-\epsilon)\alpha^2$
and $\log|\mathcal V|\ge c_0 m$ gives the stated average-case testing lower bound. To lower bound total-variation estimation risk, define a classifier from any estimator $\widehat P$ by
\begin{equation}
\widehat v(\widehat P)\in\arg\min_{v\in \mathcal V}\ \|\widehat P-P_v\|_{\rm TV}.
\end{equation}
If $\|\widehat P-P_V\|_{\rm TV}<\tfrac12\min_{v'\neq V}\|P_V-P_{v'}\|_{\rm TV}$ then necessarily $\widehat v(\widehat P)=V$.
Let $\Delta:=\min_{v\neq v'}\|P_v-P_{v'}\|_{\rm TV}\ge c_1(1-\epsilon)\alpha$.
Then
\begin{equation}
\begin{aligned}
\Pr\{\widehat v(\widehat P)\neq V\}
&\le
\Pr\{\|\widehat P-P_V\|_{\rm TV}\ge \Delta/2\}
\\&\le
\frac{2}{\Delta}\,\E[\|\widehat P-P_V\|_{\rm TV}],
\end{aligned}
\end{equation}
by Markov's inequality.
Rearranging yields $\E[\|\widehat P-P_V\|_{\rm TV}]
\ \ge\
\frac{\Delta}{2}\,\Pr\{\widehat v(\widehat P)\neq V\}$ and substituting the testing lower bound completes the proof.
\end{proof}

\section{Conclusion}

This work introduced the innovation capacity $\IC$ of a dynamical learning system as a complement to the information-processing, Doob-predictable capacity $\IPC$.
For a $d$-dimensional readout, the observable rank $\rank(\Sigma_{XX})$ splits exactly into predictable and innovation contributions, giving the conservation law $\IPC+\IC=\rank(\Sigma_{XX})\le d$ in Theorem~\ref{thm:conservation}.
Any degradation of $\IPC$ in noisy physical reservoirs is not lost capacity, but is reallocated to tasks orthogonal to the input filtration.

In linear-Gaussian Johnson-Nyquist regimes, the split admits a generalized-eigenvalue shrinkage interpretation and yields an explicit temperature tradeoff in Proposition~\ref{prop:temperature-clean}. Increasing temperature monotonically shifts capacity from $\IPC$ to $\IC$ while conserving their sum. Geometrically, in whitened coordinates the predictable and innovation split corresponds to complementary ellipsoids whose axis fractions sum to one. The quantity $\IC$ is the trace of the innovation fraction operator and therefore quantifies a whitened innovation-volume budget. A nontrivial innovation budget forces a high-dimensional $\tau$-innovation subspace $\mathcal U_\tau$ with an $O(1)$ one-step innovation variance floor by Lemma~\ref{lem:Lt-bounds}.

To lift this one-step width to a block-level explored-set statement, we imposed weak dependence and anti-concentration regularity on the $\tau$-subspace. Under these assumptions, innovation blocks have an explicit extensive differential-entropy lower bound and hence exponentially many distinguishable innovation histories at fixed resolution in Theorem~\ref{thm:innovation-exp}.
Finally, using a typical-set-localized packing in total variation and Kullback-Leibler divergence together with Fano's inequality, Theorem~\ref{thm:avgcase-innovation} shows that learning the induced innovation-block law in total variation is information-theoretically hard on average over an exponentially large family of perturbations supported on typical outcomes.
The effective dimension is $m=L_\tau b$ and Lemma~\ref{lem:Lt-bounds} controls $L_\tau$ explicitly in terms of $\IC$.

\begin{acknowledgments}
AMP thanks Chloe Rossin, André Melo and Eric Peterson for helpful conversations.
\end{acknowledgments}

\appendix
\section{Technical assumptions}
\label{app:assumptions}

\begin{assumption}[Regular innovation blocks on a $\tau$-innovation subspace]
\label{asmp:innov-regularity}
Fix $\tau\in(0,1)$. For each block size $b\in\mathbb{N}$ in the regime of interest, let $P_\tau$ be the $\tau$-innovation projector from Lemma~\ref{lem:Lt-bounds} with rank $L_\tau$ and set $m:=L_\tau b$. Assume the stacked projected whitened innovation block
\begin{equation}
Y^{(b)}_t := [\,P_\tau \Delta Z_{t-b+1},\dots,P_\tau \Delta Z_t\,]\in\mathbb{R}^m
\end{equation}
admits a density $f_{Y^{(b)}_t}$ with finite supremum, $\|f_{Y^{(b)}_t}\|_\infty<\infty$. Assume its isotropic constant
\begin{equation}
L_{Y^{(b)}_t}:=\|f_{Y^{(b)}_t}\|_\infty^{1/m}\,\det(\Cov(Y^{(b)}_t))^{1/(2m)}
\end{equation}
is uniformly bounded by a constant $L_\star<\infty$, independent of $b$ and $t$ in the regime of interest.
\end{assumption}

\begin{assumption}[Weak dependence and AEP on the $\tau$-innovation subspace]
\label{asmp:mixing}
Fix $\tau\in(0,1)$ and let $Y_t:=P_\tau\Delta Z_t\in\R^{L_\tau}$ denote the projected whitened innovation process. Assume $\{Y_t\}$ is stationary. Assume the autocovariances are summable in operator norm,
\begin{equation}
\sum_{k=1}^{\infty}\big\|\Cov(Y_0,Y_k)\big\|_{\mathrm{op}}\ \le\ \frac{\tau}{4}.
\end{equation}
A sufficient condition is quantitative strong mixing with suitable rate, see Lemma~\ref{lem:cov-decay-mixing} and Corollary~\ref{cor:block-floor-from-mixing}; see also standard treatments of Markov-process mixing \cite{meyn2009markov,levin2017markov}.

For each block size $b$, assume the stationary process of stacked innovation blocks
\begin{equation}
Y^{(b)}_t := [\,Y_{t-b+1},\dots,Y_t\,]\in\mathbb{R}^{L_\tau b}
\end{equation}
satisfies a continuous Shannon-McMillan-Breiman asymptotic equipartition property (AEP)\cite{mcmillan1953basic,breiman1957individual,barron1985ergodic,cover1999elements,csiszar2011information}.
For every $\epsilon\in(0,1)$ there exists a $(1-\epsilon)$-typical set $\mathcal T\subset\R^{L_\tau b}$ with
\begin{equation}
\log \Vol(\mathcal T)=h(Y^{(b)}_t)\pm O(L_\tau b),
\end{equation}
where the implicit constant in $O(L_\tau b)$ may depend on $\epsilon$ but not on $b$.
\end{assumption}

\begin{lemma}[Block covariance floor from summable autocovariances]
\label{lem:block-cov-floor}
Let $\{Y_t\}_{t\in\mathbb{Z}}$ be centered and stationary in $\R^{L}$ with
\begin{equation}
\Cov(Y_t)=K_0\succeq \tau I_L
\end{equation}
for some $\tau>0$.
Let $K_k:=\Cov(Y_0,Y_k)$.
Assume the positive-lag autocovariances are absolutely summable in operator norm,
\begin{equation}
\sum_{k=1}^{\infty}\big\|K_k\big\|_{\mathrm{op}}\ \le\ \varepsilon
\qquad\text{for some }\varepsilon\in\bigl(0,\tfrac{\tau}{2}\bigr).
\end{equation}
Then for every block length $b\in\N$,
\begin{equation}\label{eq:block-cov-floor-clean}
\Cov\!\big([Y_{t-b+1},\ldots,Y_t]\big)\ \succeq\ (\tau-2\varepsilon)\,I_{Lb}.
\end{equation}
In particular, if $\sum_{k\ge 1}\|K_k\|_{\mathrm{op}}\le \tau/4$ then
$\Cov([Y_{t-b+1},\ldots,Y_t])\succeq (\tau/2)\,I_{Lb}$ for all $b\in\N$.
\end{lemma}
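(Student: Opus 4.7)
The plan is to view the block covariance as a block Toeplitz matrix and to split it into a positive-definite diagonal part plus an off-diagonal remainder whose operator norm is controlled by the summable autocovariance tail. Concretely, stack the block as $\mathbf Y := [Y_{t-b+1}^\top,\ldots,Y_t^\top]^\top\in\R^{Lb}$ and write its covariance as $\Sigma_b$, whose $(i,j)$ block is $K_{j-i}$ with the Hermitian symmetry $K_{-k}=K_k^\top$. Decompose
\begin{equation}
\Sigma_b \;=\; I_b\otimes K_0 \;+\; R,
\end{equation}
where $R$ collects the off-diagonal blocks $K_{j-i}$ for $i\neq j$ and vanishes on the diagonal. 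Stationarity plus the hypothesis $K_0\succeq\tau I_L$ immediately gives $I_b\otimes K_0\succeq \tau I_{Lb}$, so the problem reduces to the uniform bound $\|R\|_{\mathrm{op}}\le 2\varepsilon$, independent of $b$.

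The main step is a quadratic-form (Schur-test) estimate. For any vector $x=(x_1,\ldots,x_b)$ with $x_i\in\R^L$, expand
\begin{equation}
|x^\top R\,x|
\ \le\ \sum_{k=1}^{b-1}\Big(\big|\sum_{i=1}^{b-k}x_i^\top K_{-k}\,x_{i+k}\big|+\big|\sum_{i=1}^{b-k}x_{i+k}^\top K_{k}\,x_{i}\big|\Big)
\ \le\ 2\sum_{k=1}^{b-1}\|K_k\|_{\mathrm{op}}\sum_{i=1}^{b-k}\|x_i\|\,\|x_{i+k}\|,
\end{equation}
using $\|K_{-k}\|_{\mathrm{op}}=\|K_k\|_{\mathrm{op}}$. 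Applying the AM-GM inequality $\|x_i\|\|x_{i+k}\|\le\tfrac12(\|x_i\|^2+\|x_{i+k}\|^2)$ and noting that each coordinate-norm square is counted at most twice across the inner sum yields
\begin{equation}
|x^\top R\,x|\ \le\ 2\|x\|^2\sum_{k=1}^{b-1}\|K_k\|_{\mathrm{op}}\ \le\ 2\varepsilon\,\|x\|^2,
\end{equation}
so $-2\varepsilon I_{Lb}\preceq R\preceq 2\varepsilon I_{Lb}$. Combining with the diagonal floor gives $\Sigma_b\succeq (\tau-2\varepsilon)I_{Lb}$, which is the claimed bound; specializing to $\varepsilon=\tau/4$ yields the $(\tau/2)$ corollary.

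The one subtlety to watch is that the estimate must be uniform in $b$, since later Theorem~\ref{thm:innovation-exp} applies it for every block length; this is the reason for the Schur-test route rather than a naive triangle inequality on the block norms (which would cost a factor $b$). The Hermitian pairing $K_{-k}=K_k^\top$ and the $b$-independent tail bound $\sum_{k\ge 1}\|K_k\|_{\mathrm{op}}\le\varepsilon$ are what make the off-diagonal quadratic form absorbable into $2\varepsilon\|x\|^2$ regardless of $b$. No deep obstacle is expected; the main care is in the bookkeeping of indices and the use of $\|K_{-k}\|_{\mathrm{op}}=\|K_k\|_{\mathrm{op}}$ to pair positive and negative lag contributions correctly.
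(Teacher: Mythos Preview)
Your proof is correct and follows essentially the same route as the paper: decompose the block Toeplitz covariance into its block-diagonal part $I_b\otimes K_0\succeq\tau I_{Lb}$ plus an off-diagonal remainder, then bound the quadratic form of the remainder by $2\varepsilon\|x\|^2$ via the per-block operator-norm bound, AM--GM, and the summability hypothesis. The only cosmetic difference is that you phrase the off-diagonal estimate as a two-sided bound $\|R\|_{\mathrm{op}}\le 2\varepsilon$, whereas the paper works directly with the one-sided lower bound on the full quadratic form.
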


\begin{proof}
Fix $b\in\N$ and write the stacked block as
\[
Y^{(b)}_t := [Y_{t-b+1}^\top,\ldots,Y_t^\top]^\top \in \R^{Lb}.
\]
Let $x=(x_1^\top,\ldots,x_b^\top)^\top\in\R^{Lb}$ with blocks $x_i\in\R^{L}$.
By stationarity, $\Cov(Y^{(b)}_t)$ is block Toeplitz with diagonal blocks $K_0$ and off-diagonal blocks $K_{j-i}$, hence
\begin{equation}
\begin{aligned}
x^\top \Cov(Y^{(b)}_t)\,x
&=
\sum_{i=1}^b x_i^\top K_0 x_i
\\&+\;
\sum_{1\le i<j\le b}\Big(x_i^\top K_{j-i} x_j \;+\; x_j^\top K_{j-i}^\top x_i\Big).
\end{aligned}
\end{equation}
Using $K_0\succeq \tau I_L$ and the bounds
\begin{equation}
\big|x_i^\top K_{j-i}x_j\big|
\ \le\ \|K_{j-i}\|_{\mathrm{op}}\ \|x_i\|_2\ \|x_j\|_2,
\qquad
2ab\le a^2+b^2,
\end{equation}
we obtain
\begin{equation}
\begin{aligned}
x^\top \Cov(Y^{(b)}_t)\,x
\ &\ge\
\tau\sum_{i=1}^b \|x_i\|_2^2
\\&\quad-\;
2\sum_{k=1}^{b-1}\|K_k\|_{\mathrm{op}}
\sum_{i=1}^{b-k}\|x_i\|_2\,\|x_{i+k}\|_2
\\
&\ge\
\tau\sum_{i=1}^b \|x_i\|_2^2
\\&\quad-\;
\sum_{k=1}^{b-1}\|K_k\|_{\mathrm{op}}
\sum_{i=1}^{b-k}\Big(\|x_i\|_2^2+\|x_{i+k}\|_2^2\Big)
\\
&\ge\
\Big(\tau-2\sum_{k=1}^{b-1}\|K_k\|_{\mathrm{op}}\Big)\sum_{i=1}^b \|x_i\|_2^2
\\
&\ge\
(\tau-2\varepsilon)\,\|x\|_2^2,
\end{aligned}
\end{equation}
where we used $\sum_{i=1}^{b-k}(\|x_i\|_2^2+\|x_{i+k}\|_2^2)\le 2\sum_{i=1}^b\|x_i\|_2^2$.
Since this holds for all $x$, we conclude \eqref{eq:block-cov-floor-clean}.
\end{proof}

\begin{lemma}[Mixing implies operator-norm covariance decay]
\label{lem:cov-decay-mixing}
Let $\{Y_t\}_{t\in\mathbb Z}$ be a centered, stationary process in $\R^{L}$. Let $\alpha_Y(k)$ denote its strong mixing coefficients,
\begin{equation}
\begin{aligned}
\label{eq:alpha-mixing}
\alpha_Y(k)
&:=\sup\big\{\big|\Pr(A\cap B)-\Pr(A)\Pr(B)\big|:\,
\\&A\in\sigma(Y_s\,:\,s\le 0),\ B\in\sigma(Y_s\,:\,s\ge k)\big\}.
\end{aligned}
\end{equation}
Assume there exists $\delta>0$ such that the directional $(2+\delta)$ moment is finite,
\begin{equation}
M_{2+\delta}\;:=\;\sup_{\|v\|_2=1}\ \|v^\top Y_0\|_{L^{2+\delta}}
\;<\;\infty.
\end{equation}
Write $K_k:=\Cov(Y_0,Y_k)\in\mathbb{R}^{L\times L}$.
Then for every $k\ge 1$,
\begin{equation}\label{eq:opcov-alpha}
\|K_k\|_{\mathrm{op}}
\ \le\
8\,M_{2+\delta}^2\ \alpha_Y(k)^{\delta/(2+\delta)}.
\end{equation}
Consequently,
\begin{equation}\label{eq:opcov-sum-alpha}
\sum_{k=1}^\infty \|K_k\|_{\mathrm{op}}
\ \le\
8\,M_{2+\delta}^2\ \sum_{k=1}^\infty \alpha_Y(k)^{\delta/(2+\delta)},
\end{equation}
whenever the right-hand side is finite.
\end{lemma}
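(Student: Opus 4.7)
The plan is to reduce the operator-norm bound on the matrix $K_k=\Cov(Y_0,Y_k)$ to a scalar covariance bound between one-dimensional projections, and then invoke a standard Davydov/Rio-type covariance inequality for strongly mixing sequences. This is the canonical route for inequalities of the form \eqref{eq:opcov-alpha}, so the proof is mostly bookkeeping once the correct named inequality is in place.

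First I would write
\begin{equation}
\|K_k\|_{\mathrm{op}}
=\sup_{\|u\|_2=\|v\|_2=1}|u^\top K_k v|
=\sup_{\|u\|_2=\|v\|_2=1}\big|\Cov(u^\top Y_0,\,v^\top Y_k)\big|,
\end{equation}
using bilinearity of covariance and stationarity. For fixed unit vectors $u,v$, the scalar $\xi:=u^\top Y_0$ is measurable with respect to $\mathcal{F}_{\le 0}:=\sigma(Y_s:s\le 0)$ and $\eta:=v^\top Y_k$ is measurable with respect to $\mathcal{F}_{\ge k}:=\sigma(Y_s:s\ge k)$, so by the definition \eqref{eq:alpha-mixing} the mixing coefficient between their generating $\sigma$-algebras is at most $\alpha_Y(k)$.

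Next I would apply Davydov's covariance inequality (see e.g.\ Rio's formulation): for $\xi\in L^p(\mathcal{F}_{\le 0})$ and $\eta\in L^q(\mathcal{F}_{\ge k})$ with $1/p+1/q+1/r=1$,
\begin{equation}
|\Cov(\xi,\eta)|\le 8\,\|\xi\|_{L^p}\,\|\eta\|_{L^q}\,\alpha_Y(k)^{1/r}.
\end{equation}
I would take $p=q=2+\delta$, so that $1/r=1-2/(2+\delta)=\delta/(2+\delta)$. By the definition of $M_{2+\delta}$ and stationarity, $\|\xi\|_{L^{2+\delta}}\le M_{2+\delta}$ and likewise for $\eta$, which yields
\begin{equation}
\big|\Cov(u^\top Y_0,v^\top Y_k)\big|\le 8\,M_{2+\delta}^2\,\alpha_Y(k)^{\delta/(2+\delta)}.
\end{equation}
Taking the supremum over unit $u,v$ gives \eqref{eq:opcov-alpha}, and summing over $k\ge 1$ gives \eqref{eq:opcov-sum-alpha}.

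The main obstacle is essentially notational: one must be careful to cite the correct form of the covariance inequality and its constant (Davydov's original constant is often stated as $12$, while Rio's sharpened version gives $8$ and matches what is claimed in the statement), and one should record that the mixing coefficient between $\sigma(\xi)$ and $\sigma(\eta)$ is bounded above by $\alpha_Y(k)$ because $\sigma(\xi)\subseteq\mathcal{F}_{\le 0}$ and $\sigma(\eta)\subseteq\mathcal{F}_{\ge k}$ and $\alpha$-mixing is monotone under coarsening of $\sigma$-algebras. No analytic difficulty beyond invoking the named inequality arises.
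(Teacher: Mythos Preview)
Your proposal is correct and follows essentially the same route as the paper's own proof: reduce $\|K_k\|_{\mathrm{op}}$ to a supremum of scalar covariances $|\Cov(u^\top Y_0,v^\top Y_k)|$, apply the Davydov/Rio covariance inequality with $p=q=2+\delta$ to get the exponent $\delta/(2+\delta)$ and the constant $8$, bound the $L^{2+\delta}$ norms by $M_{2+\delta}$ via stationarity, and sum. Your remarks on the constant and on the monotonicity of $\alpha$ under coarsening are exactly the bookkeeping the paper leaves implicit.
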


\begin{proof}
Fix unit vectors $a,b\in\mathbb R^{L}$ and set $U:=a^\top Y_0$, $V:=b^\top Y_k$.
Then $U$ is measurable with respect to $\sigma(Y_s\,:\,s\le 0)$ and $V$ is measurable with respect to $\sigma(Y_s\,:\,s\ge k)$.
A standard covariance inequality for strong mixing sequences, see \cite{doukhan1994mixing,bradley2005survey}, gives, for $p=q=2+\delta$,
\begin{equation}
\begin{aligned}
|\Cov(U,V)|
\ &\le\ 8\ \alpha_Y(k)^{1-\frac1p-\frac1q}\ \|U\|_{L^{p}}\ \|V\|_{L^{q}}
\\&=
8\ \alpha_Y(k)^{\delta/(2+\delta)}\ \|U\|_{L^{2+\delta}}\ \|V\|_{L^{2+\delta}}.
\end{aligned}
\end{equation}
By definition of $M_{2+\delta}$ and stationarity, $\|U\|_{L^{2+\delta}},\|V\|_{L^{2+\delta}}\le M_{2+\delta}$, hence
\begin{equation}
|a^\top K_k b|
=|\Cov(a^\top Y_0,b^\top Y_k)|
\le 8 M_{2+\delta}^2\alpha_Y(k)^{\delta/(2+\delta)}.
\end{equation}
Taking the supremum over $\|a\|_2=\|b\|_2=1$ yields \eqref{eq:opcov-alpha} and summing gives \eqref{eq:opcov-sum-alpha}.
\end{proof}

\begin{corollary}[A mixing-rate condition sufficient for a uniform block floor]
\label{cor:block-floor-from-mixing}
In the setting of Lemma~\ref{lem:cov-decay-mixing}, assume additionally that $\Cov(Y_0)\succeq \tau I_L$ for some $\tau>0$.
If
\begin{equation}\label{eq:alpha-sum-sufficient}
\sum_{k=1}^\infty \alpha_Y(k)^{\delta/(2+\delta)}
\ \le\
\frac{\tau}{32\,M_{2+\delta}^2},
\end{equation}
then $\sum_{k\ge 1}\|K_k\|_{\mathrm{op}}\le \tau/4$, hence Lemma~\ref{lem:block-cov-floor} yields
\[
\Cov\!\big([Y_{t-b+1},\ldots,Y_t]\big)\ \succeq\ \frac{\tau}{2}\,I_{Lb}
\qquad\text{for all }b\in\N.
\]
\end{corollary}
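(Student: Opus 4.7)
The plan is to chain the two immediately preceding lemmas, which are already calibrated to compose. First I would invoke Lemma~\ref{lem:cov-decay-mixing} to pass from a quantitative mixing-rate hypothesis to an operator-norm autocovariance bound: its conclusion \eqref{eq:opcov-sum-alpha} says
\begin{equation*}
\sum_{k=1}^{\infty}\|K_k\|_{\mathrm{op}}\ \le\ 8\,M_{2+\delta}^2\sum_{k=1}^\infty \alpha_Y(k)^{\delta/(2+\delta)}.
\end{equation*}
Substituting the hypothesis \eqref{eq:alpha-sum-sufficient} then gives
\begin{equation*}
\sum_{k=1}^{\infty}\|K_k\|_{\mathrm{op}}\ \le\ 8\,M_{2+\delta}^2\cdot\frac{\tau}{32\,M_{2+\delta}^2}\ =\ \frac{\tau}{4},
\end{equation*}
which is precisely the summable-autocovariance input needed by the ``in particular'' clause of Lemma~\ref{lem:block-cov-floor} (corresponding to $\varepsilon=\tau/4\in(0,\tau/2)$).

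Next I would verify that the other hypothesis of Lemma~\ref{lem:block-cov-floor}, namely the marginal covariance floor $K_0=\Cov(Y_0)\succeq \tau I_L$, is exactly the additional assumption imposed in the statement of the corollary. With both hypotheses of Lemma~\ref{lem:block-cov-floor} in force, its conclusion \eqref{eq:block-cov-floor-clean} applies to yield
\begin{equation*}
\Cov\!\big([Y_{t-b+1},\ldots,Y_t]\big)\ \succeq\ (\tau-2\varepsilon)\,I_{Lb}\ =\ \frac{\tau}{2}\,I_{Lb}
\qquad\text{for all }b\in\N,
\end{equation*}
which is the displayed conclusion. Stationarity, inherited from the setup of Lemma~\ref{lem:cov-decay-mixing}, ensures the floor is uniform in the time index $t$.

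There is no real obstacle here; the corollary is a book-keeping concatenation of Lemmas~\ref{lem:cov-decay-mixing} and~\ref{lem:block-cov-floor}, and the only ``step'' worth noting is that the numerical factor $32$ in the denominator of \eqref{eq:alpha-sum-sufficient} is precisely what is required to cancel the prefactor $8\,M_{2+\delta}^2$ from \eqref{eq:opcov-sum-alpha} and land at the threshold $\tau/4$ that makes Lemma~\ref{lem:block-cov-floor}'s quantitative floor produce $\tau/2$.
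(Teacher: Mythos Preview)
Your proposal is correct and follows exactly the intended route: the paper provides no separate proof for this corollary because the chain of implications is already spelled out in its statement, and you have simply unpacked that chain by applying \eqref{eq:opcov-sum-alpha} to convert the mixing-rate hypothesis into $\sum_{k\ge 1}\|K_k\|_{\mathrm{op}}\le \tau/4$ and then invoking the ``in particular'' clause of Lemma~\ref{lem:block-cov-floor}.
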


\begin{lemma}[Non-atomicity under absolute continuity]
\label{lem:nonatomic}
Assume Assumption~\ref{asmp:innov-regularity}.
Let $b\in\mathbb{N}$ and set $m:=L_\tau b$.
Define the stacked projected whitened innovation block
\begin{equation}
Y^{(b)}_t := [\,P_\tau \Delta Z_{t-b+1},\dots,P_\tau \Delta Z_t\,]\in\mathbb{R}^m.
\end{equation}
Then $Y^{(b)}_t$ admits a density with respect to Lebesgue measure on $\mathbb{R}^m$, hence its law is non-atomic.
Moreover, for any invertible $W\in\mathbb{R}^{m\times m}$, the transformed block $Y:=WY^{(b)}_t$ also admits a density and is non-atomic.
\end{lemma}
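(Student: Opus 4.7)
The plan is to unpack Assumption~\ref{asmp:innov-regularity} and then invoke the change-of-variables formula. First I would observe that the assumption \emph{directly} provides what the lemma needs for the untransformed block: it posits that $Y^{(b)}_t$ admits a density $f_{Y^{(b)}_t}$ on $\mathbb{R}^m$ with $\|f_{Y^{(b)}_t}\|_\infty<\infty$. Absolute continuity with respect to Lebesgue measure on $\mathbb{R}^m$ immediately forces non-atomicity: for any singleton $\{y\}\subset\mathbb{R}^m$, the Lebesgue measure vanishes, hence
\begin{equation}
\Pr\{Y^{(b)}_t=y\}=\int_{\{y\}} f_{Y^{(b)}_t}(u)\,du=0.
\end{equation}
Since $Y^{(b)}_t$ is a measurable function of the innovation block built from $\Delta Z_{t-b+1},\dots,\Delta Z_t$, the existence of a density is the only nontrivial input, and it is supplied by assumption.

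For the transformed block $Y:=WY^{(b)}_t$ with $W\in\mathbb{R}^{m\times m}$ invertible, I would apply the standard linear change-of-variables formula: since $W^{-1}$ exists and $|\det W|>0$, pushing forward the density of $Y^{(b)}_t$ under $W$ yields
\begin{equation}
f_Y(y)\;=\;\frac{1}{|\det W|}\,f_{Y^{(b)}_t}\!\big(W^{-1}y\big),
\end{equation}
which is a bona fide Lebesgue density on $\mathbb{R}^m$ (nonnegative, measurable, and integrating to one by the change-of-variables identity). Its supremum is bounded by $|\det W|^{-1}\|f_{Y^{(b)}_t}\|_\infty<\infty$, so $Y$ is absolutely continuous with respect to Lebesgue measure. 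Non-atomicity of $Y$ then follows by exactly the same singleton argument as above.

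There is no real obstacle here: the lemma is essentially a bookkeeping statement that packages the density hypothesis of Assumption~\ref{asmp:innov-regularity} in the form required by the packing construction in the proof of Theorem~\ref{thm:avgcase-innovation} (specifically, the application of Lemma~\ref{lem:equal-mass-partition}, which needs the conditional law $P_0(\cdot\mid \mathcal T)$ to be non-atomic). The only care required is to ensure that invertibility of $W$ is used (to guarantee $|\det W|\neq 0$); no additional regularity of $W$ or of the innovation process is needed.
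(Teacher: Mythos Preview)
Your proposal is correct and follows essentially the same approach as the paper: invoke the density hypothesis from Assumption~\ref{asmp:innov-regularity} to get absolute continuity (hence non-atomicity) of $Y^{(b)}_t$, then use that an invertible linear map preserves absolute continuity via change of variables. The paper's proof is terser (it just says the pushforward by a $C^1$ bijection preserves absolute continuity), but your explicit density formula $f_Y(y)=|\det W|^{-1}f_{Y^{(b)}_t}(W^{-1}y)$ is exactly the content of that step.
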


\begin{proof}
Assumption~\ref{asmp:innov-regularity} gives absolute continuity of $Y^{(b)}_t$, hence non-atomicity.
If $Y=WY^{(b)}_t$ with $W$ invertible, then $Y$ is the pushforward of $Y^{(b)}_t$ by a $C^1$ bijection, so absolute continuity is preserved.
\end{proof}

\begin{lemma}[Equal-mass partition for non-atomic measures]
\label{lem:equal-mass-partition}
Let $P$ be a non-atomic probability measure on $\R^m$ and let $T\subset\R^m$ be measurable with $P(T)=p>0$.
Then for every integer $M\ge 1$ there exist measurable, pairwise disjoint sets $B_1,\dots,B_M\subset T$
with $\bigcup_{j=1}^M B_j\subseteq T$ and
\begin{equation}
P(B_j)=\frac{p}{M}\qquad\text{for all }j=1,\dots,M.
\end{equation}
\end{lemma}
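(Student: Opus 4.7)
The plan is to reduce the claim to Sierpi\'nski's classical range theorem for non-atomic measures and then iterate. Sierpi\'nski's theorem asserts that if $P$ is non-atomic and $A\subseteq\R^m$ is measurable with $P(A)=a$, then for every $c\in[0,a]$ some measurable $B\subseteq A$ satisfies $P(B)=c$. Taking this as granted, I build the equal-mass partition inductively. Set $A_1:=T$, which has mass $p\ge p/M$, and pick $B_1\subseteq A_1$ with $P(B_1)=p/M$. Given pairwise disjoint $B_1,\dots,B_{j-1}\subseteq T$ of mass $p/M$ each, the residual $A_j:=T\setminus\bigcup_{i<j}B_i$ has mass $(M-j+1)p/M\ge p/M$, so the range theorem supplies $B_j\subseteq A_j$ with $P(B_j)=p/M$. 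The cells $B_1,\dots,B_M$ are disjoint by construction and each lies in $T$, yielding the claim.

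It remains to justify Sierpi\'nski's theorem in this setting. I would proceed by exhaustion in two steps. First, verify the \emph{small subset lemma}: any measurable $E$ with $P(E)>0$ contains, for every $\varepsilon>0$, a measurable subset of positive measure less than $\varepsilon$. This follows from non-atomicity by iterated halving --- splitting $E$ into two pieces of positive measure, retaining the smaller one, and repeating until the mass falls below $\varepsilon$. Second, fix $c\in(0,a]$ and define $s:=\sup\{P(B):B\subseteq A\ \text{measurable},\ P(B)\le c\}$. Construct an increasing sequence $B_n\subseteq A$ with $P(B_n)\le c$ and $P(B_n)\uparrow s$ by greedy enlargement: at step $n$, append to $B_{n-1}$ a measurable subset of $A\setminus B_{n-1}$ whose mass is within $1/n$ of the largest amount still compatible with the constraint $P(B_n)\le c$. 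The union $B^{\star}:=\bigcup_n B_n$ satisfies $P(B^{\star})=s\le c$. If $s<c$, the small subset lemma gives $C\subseteq A\setminus B^{\star}$ with $0<P(C)<c-s$, so $B^{\star}\cup C$ contradicts the definition of $s$; hence $P(B^{\star})=c$.

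The main obstacle is the bookkeeping inside the exhaustion step: one must ensure each $B_n$ remains below mass $c$ while still pushing the sequence to the supremum, since naively chosen maximizing sequences can overshoot. A slicker alternative is to invoke the standard-Borel isomorphism theorem: every non-atomic finite Borel measure on a standard Borel space --- and $(\R^m,\mathcal B(\R^m),P)$ is such --- is measure-isomorphic to Lebesgue measure on $[0,p]$. Under such an isomorphism, $T$ pulls back from a full-mass subset of $[0,p]$, which can be partitioned into $M$ measurable pieces of mass $p/M$ by intersecting with $M$ adjacent intervals; transporting those pieces back produces the desired $B_1,\dots,B_M\subseteq T$. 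Either route proves the lemma, and the iteration in the first paragraph is then purely mechanical.
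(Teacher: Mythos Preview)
Your approach is correct and essentially identical to the paper's: invoke the range property of non-atomic measures (Sierpi\'nski's theorem) to extract a subset of mass $p/M$, then iterate on the residual $T\setminus B_1$, which is still non-atomic. The paper simply takes the range property as a black box, while you go further and sketch two proofs of it; otherwise the arguments coincide.
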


\begin{proof}
Because $P$ is non-atomic, for any $q\in(0,p)$ there exists a measurable subset $B\subset T$ with $P(B)=q$. Construct $B_1$ with $P(B_1)=p/M$. Then $P(T\setminus B_1)=p-p/M=(M-1)p/M$ and $P$ restricted to $T\setminus B_1$ is still non-atomic, so the construction can be repeated to find $B_2\subset T\setminus B_1$ with $P(B_2)=p/M$ and so on.
After $M$ steps the sets are disjoint and have the desired masses.
\end{proof}

\begin{proposition}[Entropy lower bound from isotropic constant]
\label{prop:isotropic-entropy}
Let $Y\in\R^{m}$ admit a density $f_Y$ with $\|f_Y\|_\infty<\infty$ and covariance $K_Y:=\Cov(Y)$. Define the isotropic constant of $Y$ by
\begin{equation}
\label{eq:isotropic-const}
L_Y \;:=\; \|f_Y\|_\infty^{1/m}\,\det(K_Y)^{1/(2m)}.
\end{equation}
Then
\begin{equation}
\begin{aligned}
\label{eq:isotropic-entropy}
h(Y)\ &\ge\ \frac12\log\det(K_Y)\ -\ m\log L_Y
\ \\&=\ \frac{m}{2}\log\!\Big(\frac{\det(K_Y)^{1/m}}{L_Y^2}\Big).
\end{aligned}
\end{equation}
In particular, if $K_Y\succeq \sigma^2 I_m$ and $L_Y\le L_\star$, then
\begin{equation}
\label{eq:isotropic-entropy-sigma}
h(Y)\ \ge\ \frac{m}{2}\log\!\Big(\frac{\sigma^2}{L_\star^2}\Big).
\end{equation}
\end{proposition}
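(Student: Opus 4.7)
The plan is to deduce the entropy bound from a one-line a.e.\ pointwise estimate on $-\log f_Y$ combined with the algebraic definition of the isotropic constant, and then to obtain the $\sigma,L_\star$ form by Loewner-monotonicity of the determinant on the positive semidefinite cone.

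First I would write $h(Y)=\E[-\log f_Y(Y)]$ and use the (essential) pointwise bound $f_Y(y)\le \|f_Y\|_\infty$ to obtain the standard sup-norm floor on differential entropy,
\[
h(Y) \;\ge\; -\log\|f_Y\|_\infty,
\]
which is finite by the assumption $\|f_Y\|_\infty<\infty$. Next, solving the defining relation \eqref{eq:isotropic-const} for the sup-norm yields
\[
\|f_Y\|_\infty \;=\; \frac{L_Y^m}{\det(K_Y)^{1/2}},
\]
and substituting gives
\[
h(Y) \;\ge\; \tfrac{1}{2}\log\det(K_Y)\;-\;m\log L_Y,
\]
which is precisely \eqref{eq:isotropic-entropy}; the alternative form $\tfrac{m}{2}\log\bigl(\det(K_Y)^{1/m}/L_Y^2\bigr)$ is a rewrite of this same inequality. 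For the specialization \eqref{eq:isotropic-entropy-sigma}, Loewner-monotonicity of $\det$ on the PSD cone gives $\det(K_Y)\ge \sigma^{2m}$ whenever $K_Y\succeq \sigma^2 I_m$, hence $\log\det(K_Y)\ge m\log\sigma^2$; combining with $L_Y\le L_\star$ closes the argument.

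There is essentially no serious obstacle here: the only non-algebraic content is the a.e.\ inequality $f_Y(Y)\le \|f_Y\|_\infty$, which must be interpreted ``almost surely under $P_Y$'' rather than everywhere. This is automatic because $\|\cdot\|_\infty$ denotes the essential supremum, so the inequality holds Lebesgue-a.e., and $P_Y$ is absolutely continuous with respect to Lebesgue measure. Everything else is algebra; no convexity, Gaussian comparison, or concentration argument is required, which is exactly what makes this lemma a clean interface between the covariance floor delivered by Lemma~\ref{lem:block-cov-floor} and the entropy statement needed in Theorem~\ref{thm:innovation-exp}.
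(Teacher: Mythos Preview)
Your proposal is correct and follows essentially the same approach as the paper: both use the pointwise bound $f_Y\le\|f_Y\|_\infty$ to get $h(Y)\ge -\log\|f_Y\|_\infty$, then substitute the definition of $L_Y$ to extract the $\det(K_Y)$ factor, and finally use determinant monotonicity under the Loewner order for the $\sigma,L_\star$ specialization. Your remark on the a.e./a.s.\ interpretation of the essential supremum is also exactly how the paper handles it.
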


\begin{proof}
Since $f_Y(y)\le \|f_Y\|_\infty$ almost everywhere, we have $-\log f_Y(Y)\ge -\log\|f_Y\|_\infty$ almost surely.
Taking expectations yields
\begin{equation}
h(Y)=\E[-\log f_Y(Y)]\ \ge\ -\log\|f_Y\|_\infty.
\end{equation}
By definition \eqref{eq:isotropic-const}, $\|f_Y\|_\infty = L_Y^m\,\det(K_Y)^{-1/2}$, hence
\begin{equation}
-\log\|f_Y\|_\infty\ =\ \frac12\log\det(K_Y)\ -\ m\log L_Y,
\end{equation}
which gives \eqref{eq:isotropic-entropy}.
If $K_Y\succeq \sigma^2 I_m$, then $\det(K_Y)^{1/m}\ge \sigma^2$, yielding \eqref{eq:isotropic-entropy-sigma}.
\end{proof}

\bibliographystyle{apsrev4-2}
\bibliography{refs}

@article{polloreno2025restrictions,
  title   = {Restrictions on physical stochastic reservoir computers},
  author  = {Polloreno, Anthony M.},
  journal = {Phys. Rev. Applied},
  volume  = {24},
  number  = {1},
  pages   = {014031},
  year    = {2025},
  publisher = {APS}
}

@article{dambre2012information,
  title   = {Information processing capacity of dynamical systems},
  author  = {Dambre, Joni and Verstraeten, David and Schrauwen, Benjamin and Massar, Serge},
  journal = {Scientific Reports},
  volume  = {2},
  number  = {1},
  pages   = {514},
  year    = {2012}
}

@article{kubota2021,
  title   = {Unifying framework for information processing in stochastically driven dynamical systems},
  author  = {Kubota, Tomoyuki and Takahashi, Hirokazu and Nakajima, Kohei},
  journal = {Phys. Rev. Research},
  volume  = {3},
  pages   = {043135},
  year    = {2021},
  publisher = {APS}
}

@book{doob1953,
  author    = {Doob, J. L.},
  title     = {Stochastic Processes},
  publisher = {Wiley},
  address   = {New York},
  year      = {1953}
}

@article{bode2006simplified,
  author  = {Bode, H. W. and Shannon, C. E.},
  title   = {A simplified derivation of linear least square smoothing and prediction theory},
  journal = {Proceedings of the IRE},
  volume  = {38},
  number  = {4},
  pages   = {417--425},
  year    = {1950}
}

@book{wold1938study,
  author    = {Wold, Herman},
  title     = {A Study in the Analysis of Stationary Time Series},
  publisher = {Almqvist \& Wiksell},
  year      = {1938}
}

@article{kailath1974innovations,
  author  = {Kailath, Thomas},
  title   = {The Innovations Approach to Detection and Estimation Theory},
  journal = {Proceedings of the IEEE},
  volume  = {58},
  number  = {5},
  pages   = {680--695},
  year    = {1970}
}

@article{kalman1960,
  author  = {Kalman, R. E.},
  title   = {A new approach to linear filtering and prediction problems},
  journal = {Transactions of the ASME, Journal of Basic Engineering},
  volume  = {82},
  pages   = {35--45},
  year    = {1960}
}

@book{wiener1949extrapolation,
  author    = {Wiener, Norbert},
  title     = {Extrapolation, Interpolation, and Smoothing of Stationary Time Series},
  publisher = {MIT Press},
  year      = {1949}
}

@article{kolmogorov1941interpolation,
  author  = {Kolmogorov, A. N.},
  title   = {Interpolation and Extrapolation of Stationary Random Sequences},
  journal = {Izv. Akad. Nauk SSSR Ser. Mat.},
  volume  = {5},
  pages   = {3--14},
  year    = {1941}
}

@article{johnson1928thermal,
  title   = {Thermal agitation of electricity in conductors},
  author  = {Johnson, John Bertrand},
  journal = {Physical Review},
  volume  = {32},
  number  = {1},
  pages   = {97--109},
  year    = {1928},
  publisher = {APS}
}

@article{nyquist1928thermal,
  author  = {Nyquist, Harry},
  title   = {Thermal agitation of electric charge in conductors},
  journal = {Physical Review},
  volume  = {32},
  pages   = {110--113},
  year    = {1928},
  publisher = {APS}
}

@article{callen1951irreversibility,
  author  = {Callen, Herbert B. and Welton, Theodore A.},
  title   = {Irreversibility and generalized noise},
  journal = {Physical Review},
  volume  = {83},
  pages   = {34--40},
  year    = {1951},
  publisher = {APS}
}

@article{kullback1951information,
  title   = {On information and sufficiency},
  author  = {Kullback, Solomon and Leibler, Richard A.},
  journal = {The Annals of Mathematical Statistics},
  volume  = {22},
  number  = {1},
  pages   = {79--86},
  year    = {1951}
}

@book{fano1966transmission,
  title     = {Transmission of Information: A Statistical Theory of Communications},
  author    = {Fano, Robert Mario},
  publisher = {MIT Press},
  year      = {1966}
}

@article{Shannon1949,
  author  = {Shannon, Claude E.},
  title   = {The Synthesis of Two-Terminal Switching Circuits},
  journal = {Bell System Technical Journal},
  volume  = {28},
  number  = {1},
  pages   = {59--98},
  year    = {1949}
}

@article{dobrushin1956central,
  title   = {Central limit theorem for nonstationary Markov chains. I},
  author  = {Dobrushin, Roland L.},
  journal = {Theory of Probability \& Its Applications},
  volume  = {1},
  number  = {1},
  pages   = {65--80},
  year    = {1956}
}

@article{kesten1966limit,
  title   = {A limit theorem for multidimensional Galton--Watson processes},
  author  = {Kesten, Harry and Stigum, Bernt P.},
  journal = {The Annals of Mathematical Statistics},
  volume  = {37},
  number  = {5},
  pages   = {1211--1223},
  year    = {1966}
}

@article{EvansSchulman1999,
  author  = {Evans, William S. and Schulman, Leonard J.},
  title   = {Signal propagation and noisy circuits},
  journal = {IEEE Transactions on Information Theory},
  volume  = {45},
  number  = {7},
  pages   = {2367--2373},
  year    = {1999}
}

@article{audenaert2005continuity,
  title   = {Continuity bounds on the quantum relative entropy},
  author  = {Audenaert, Koenraad M. R. and Eisert, Jens},
  journal = {Journal of Mathematical Physics},
  volume  = {46},
  pages   = {102104},
  year    = {2005},
  doi     = {10.1063/1.2044667}
}

@article{polloreno2023note,
  title   = {A note on noisy reservoir computation},
  author  = {Polloreno, Anthony M. and Wang, Reuben R. W. and Tezak, Nikolas A.},
  journal = {arXiv preprint arXiv:2302.10862},
  year    = {2023}
}

@article{parks1992lyapunov,
  title   = {A. M. Lyapunov's stability theory---100 years on},
  author  = {Parks, Patrick C.},
  journal = {IMA Journal of Mathematical Control and Information},
  volume  = {9},
  number  = {4},
  pages   = {275--303},
  year    = {1992}
}

@book{duffing1918,
  author    = {Duffing, Georg},
  title     = {Erzwungene Schwingungen bei ver{\"a}nderlicher Eigenfrequenz und ihre technische Bedeutung},
  address   = {Braunschweig},
  publisher = {Vieweg},
  year      = {1918}
}

@book{nayfeh1979nonlinear,
  author    = {Nayfeh, Ali H. and Mook, Dean T.},
  title     = {Nonlinear Oscillations},
  publisher = {Wiley},
  address   = {New York},
  year      = {1979}
}

@book{kevorkian1996multiple,
  author    = {Kevorkian, J. and Cole, Julian D.},
  title     = {Multiple Scale and Singular Perturbation Methods},
  publisher = {Springer},
  address   = {New York},
  year      = {1996}
}

@article{ehlers2025stochastic,
  title   = {Stochastic reservoir computers},
  author  = {Ehlers, Peter J. and Nurdin, Hendra I. and Soh, Daniel},
  journal = {Nature Communications},
  volume  = {16},
  number  = {1},
  pages   = {1--11},
  year    = {2025}
}

@techreport{jaeger2001echo,
  author      = {Jaeger, Herbert},
  title       = {The ``Echo State'' Approach to Analysing and Training Recurrent Neural Networks},
  institution = {German National Research Center for Information Technology (GMD)},
  number      = {148},
  address     = {Bonn, Germany},
  year        = {2001}
}

@article{maass2002real,
  title   = {Real-time computing without stable states: A new framework for neural computation based on perturbations},
  author  = {Maass, Wolfgang and Natschl{\"a}ger, Thomas and Markram, Henry},
  journal = {Neural Computation},
  volume  = {14},
  number  = {11},
  pages   = {2531--2560},
  year    = {2002}
}

@article{lukosevicius2009reservoir,
  author  = {Luko{\v{s}}evi{\v{c}}ius, Mantas and Jaeger, Herbert},
  title   = {Reservoir computing approaches to recurrent neural network training},
  journal = {Computer Science Review},
  volume  = {3},
  number  = {3},
  pages   = {127--149},
  year    = {2009}
}

@article{mcmillan1953basic,
  author  = {McMillan, Brockway},
  title   = {The Basic Theorems of Information Theory},
  journal = {The Annals of Mathematical Statistics},
  year    = {1953},
  volume  = {24},
  number  = {2},
  pages   = {196--219}
}

@article{breiman1957individual,
  author  = {Breiman, Leo},
  title   = {The Individual Ergodic Theorem of Information Theory},
  journal = {The Annals of Mathematical Statistics},
  year    = {1957},
  volume  = {28},
  number  = {3},
  pages   = {809--811}
}

@article{barron1985ergodic,
  author  = {Barron, Andrew R.},
  title   = {The strong ergodic theorem for densities: generalized Shannon--McMillan--Breiman theorem},
  journal = {The Annals of Probability},
  year    = {1985},
  volume  = {13},
  number  = {4},
  pages   = {1292--1303}
}

@book{cover1999elements,
  title     = {Elements of Information Theory},
  author    = {Cover, Thomas M. and Thomas, Joy A.},
  publisher = {Wiley},
  year      = {1999}
}

@article{nikodym1930generalisation,
  title   = {Sur une g{\'e}n{\'e}ralisation des int{\'e}grales de M. J. Radon},
  author  = {Nikodym, Otton},
  journal = {Fundamenta Mathematicae},
  volume  = {15},
  number  = {1},
  pages   = {131--179},
  year    = {1930}
}

@article{varshamov1957estimate,
  title   = {Estimate of the number of signals in error correcting codes},
  author  = {Varshamov, Rom R.},
  journal = {Doklady Akademii Nauk SSSR},
  volume  = {117},
  pages   = {739--741},
  year    = {1957}
}

@incollection{yu1997assouad,
  author    = {Yu, Bin},
  title     = {Assouad, Fano, and Le Cam},
  booktitle = {Festschrift for Lucien Le Cam},
  publisher = {Springer},
  year      = {1997}
}

@book{tsybakov2009nonparametric,
  author    = {Tsybakov, Alexandre B.},
  title     = {Introduction to Nonparametric Estimation},
  publisher = {Springer},
  year      = {2009}
}

@book{wainwright2019highdim,
  author    = {Wainwright, Martin J.},
  title     = {High-Dimensional Statistics: A Non-Asymptotic Viewpoint},
  publisher = {Cambridge University Press},
  year      = {2019}
}

@article{takesue2019large,
  title={Large-scale coherent Ising machine},
  author={Takesue, Hiroki and Inagaki, Takahiro and Inaba, Kensuke and Ikuta, Takuya and Honjo, Toshimori},
  journal={Journal of the Physical Society of Japan},
  volume={88},
  number={6},
  pages={061014},
  year={2019},
  publisher={The Physical Society of Japan}
}

@inproceedings{sohl2015deep,
  title={Deep unsupervised learning using nonequilibrium thermodynamics},
  author={Sohl-Dickstein, Jascha and Weiss, Eric and Maheswaranathan, Niru and Ganguli, Surya},
  booktitle={International conference on machine learning},
  pages={2256--2265},
  year={2015},
  organization={PMLR}
}

@article{ghimenti2025geometry,
  title={The geometry and dynamics of annealed optimization in the coherent Ising machine with hidden and planted solutions},
  author={Ghimenti, Federico and Sriram, Adithya and Yamamura, Atsushi and Mabuchi, Hideo and Ganguli, Surya},
  journal={arXiv preprint arXiv:2510.21109},
  year={2025}
}

@article{hamerly2019large,
  title={Large-scale optical neural networks based on photoelectric multiplication},
  author={Hamerly, Ryan and Bernstein, Liane and Sludds, Alexander and Solja{\v{c}}i{\'c}, Marin and Englund, Dirk},
  journal={Physical Review X},
  volume={9},
  number={2},
  pages={021032},
  year={2019},
  publisher={APS}
}

@article{li2025photonics,
  title={Photonics for neuromorphic computing: Fundamentals, devices, and opportunities},
  author={Li, Renjie and Gong, Yuanhao and Huang, Hai and Zhou, Yuze and Mao, Sixuan and Wei, Zhijian and Zhang, Zhaoyu},
  journal={Advanced Materials},
  volume={37},
  number={2},
  pages={2312825},
  year={2025},
  publisher={Wiley Online Library}
}

@article{todri2024computing,
  title={Computing with oscillators from theoretical underpinnings to applications and demonstrators},
  author={Todri-Sanial, Aida and Delacour, Corentin and Abernot, Madeleine and Sabo, Filip},
  journal={Npj Unconventional Computing},
  volume={1},
  number={1},
  pages={14},
  year={2024},
  publisher={Nature Publishing Group UK London}
}

@article{mcmahon2016fully,
  title={A fully programmable 100-spin coherent Ising machine with all-to-all connections},
  author={McMahon, Peter L and Marandi, Alireza and Haribara, Yoshitaka and Hamerly, Ryan and Langrock, Carsten and Tamate, Shuhei and Inagaki, Takahiro and Takesue, Hiroki and Utsunomiya, Shoko and Aihara, Kazuyuki and others},
  journal={Science},
  volume={354},
  number={6312},
  pages={614--617},
  year={2016},
  publisher={American Association for the Advancement of Science}
}

@article{poulin2011quantum,
  title={Quantum simulation of time-dependent Hamiltonians and the convenient illusion of Hilbert space},
  author={Poulin, David and Qarry, Angie and Somma, Rolando and Verstraete, Frank},
  journal={Physical Review Letters},
  volume={106},
  number={17},
  pages={170501},
  year={2011},
  publisher={APS}
}

@book{csiszar2011information,
  title={Information theory: coding theorems for discrete memoryless systems},
  author={Csisz{\'a}r, Imre and K{\"o}rner, J{\'a}nos},
  year={2011},
  publisher={Cambridge University Press}
}

@article{hu2023tackling,
  title={Tackling sampling noise in physical systems for machine learning applications: Fundamental limits and eigentasks},
  author={Hu, Fangjun and Angelatos, Gerasimos and Khan, Saeed A and Vives, Marti and T{\"u}reci, Esin and Bello, Leon and Rowlands, Graham E and Ribeill, Guilhem J and T{\"u}reci, Hakan E},
  journal={Physical Review X},
  volume={13},
  number={4},
  pages={041020},
  year={2023},
  publisher={APS}
}

@book{meyn2009markov,
  author    = {Meyn, Sean P. and Tweedie, Richard L.},
  title     = {Markov Chains and Stochastic Stability},
  edition   = {2},
  publisher = {Cambridge University Press},
  address   = {Cambridge},
  year      = {2009}
}

@book{levin2017markov,
  author    = {Levin, David A. and Peres, Yuval and Wilmer, Elizabeth L.},
  title     = {Markov Chains and Mixing Times},
  edition   = {2},
  publisher = {American Mathematical Society},
  address   = {Providence, RI},
  year      = {2017}
}

@book{doukhan1994mixing,
  author    = {Doukhan, Paul},
  title     = {Mixing: Properties and Examples},
  publisher = {Springer},
  address   = {New York},
  year      = {1994}
}

@article{bradley2005survey,
  author  = {Bradley, Richard C.},
  title   = {Basic Properties of Strong Mixing Conditions. A Survey and Some Open Questions},
  journal = {Probability Surveys},
  volume  = {2},
  pages   = {107--144},
  year    = {2005}
}

\end{document}